\documentclass{article}

\PassOptionsToPackage{numbers,compress}{natbib}
\usepackage[preprint]{neurips_2026}
\usepackage[utf8]{inputenc}
\usepackage[T1]{fontenc}
\usepackage{hyperref}
\usepackage{url}
\usepackage{booktabs}
\usepackage{amsfonts}
\usepackage{amsmath,amssymb,amsthm,mathtools}
\usepackage{nicefrac}
\usepackage{microtype}
\usepackage{float}
\usepackage{xcolor}
\usepackage{subcaption}
\usepackage{graphicx}
\usepackage{capt-of}
\usepackage{bm}
\usepackage{algorithm}
\usepackage{algpseudocode}
\usepackage[normalem]{ulem}

\title{Deep ZakaiJ: Structured Filtering for Jump-Diffusion Time Series Forecasting}
\author{Yan Leng \\
University of Texas at Austin\\
\texttt{yan.leng@mccombs.utexas.edu} \\
\And
Thibaut Mastrolia \\
University of California, Berkeley\\
\texttt{mastrolia@berkeley.edu} \\
\And
Hao Wang \\
University of California, Berkeley\\
\texttt{haowang013@berkeley.edu}}

\newcommand{\E}{\mathbb{E}}
\newcommand{\R}{\mathbb{R}}
\newcommand{\dd}{\mathrm{d}}
\definecolor{lengCommentColor}{RGB}{180,85,0}

\newtheorem{theorem}{Theorem}
\newtheorem{proposition}[theorem]{Proposition}
\newtheorem{corollary}[theorem]{Corollary}
\newtheorem{assumption}[theorem]{Assumption}

\begin{document}

\maketitle

\begin{abstract}
Time series driven by unobserved latent states frequently exhibit abrupt jump discontinuities whose timing and magnitude cannot be predicted from observed history alone. Classical jump-diffusion models offer a principled mathematical framework but assume rigid parametric forms, while recent neural jump models operate on fully observed trajectories without inferring the hidden states that govern the dynamics. We propose \textit{Deep ZakaiJ}, a latent-state model for partially observed jump-diffusion systems that embeds the Zakai nonlinear filtering equation into a neural encoder--decoder architecture. The encoder recursively updates a belief over the latent state via Strang splitting into three interpretable substeps: prior propagation, diffusion innovation, and jump innovation, yielding a differentiable, first-order-accurate approximation of the exact filtering evolution. The decoder is a structured jump-diffusion model explicitly conditioned on the filtered belief, preserving the separation between continuous dynamics and discontinuous shocks. On synthetic, financial, and oceanographic datasets, \textit{Deep ZakaiJ} improves distributional forecasts while remaining competitive in point accuracy, achieving calibrated predictive intervals and recovering interpretable latent structure in synthetic and qualitative case studies.
\end{abstract}

\section{Introduction}
Empirical time series rarely evolve smoothly; instead, they are frequently disrupted by abrupt jumps \citep{aminikhanghahi2017survey, short2013improved}. These discontinuities span diverse systems, from financial assets suffering sudden price shocks due to macroeconomic news \citep{merton1976option}, to oceanic wave heights spiking drastically during severe storms \citep{young2011global, young1988parametric}. Because these extreme dislocations are usually driven by latent states, predicting such trajectories remains a major open challenge.

Classical stochastic models, such as Merton’s jump-diffusions \citep{merton1976option}, stochastic volatility models \citep{heston1993closed} or more general L\'evy processes \citep{bertoin1996levy}, offer a principled mathematical foundation. While models like Heston explicitly introduce a latent volatility state to drive dynamics, they are limited by rigid parametric assumptions and are structurally restricted from capturing complex latent structure. Beyond standard deep learning time-series models \citep{yu2019review, benidis2022deep, lim2021time}, a growing line of work models sequential data through continuously evolving latent-state dynamics (e.g., Latent ODEs \citep{rubanova2019latent} and Neural CDEs \citep{kidger2020neural}), which propagate hidden states via learned vector fields and approximate the latent posterior through amortized variational inference, but treat latent states as implicit representations rather than explicit stochastic objects, lacking a principled Bayesian update mechanism. Models like CRU \citep{schirmer2022modeling} and NCDSSM \citep{ansari2023neural} close this gap by embedding continuous-discrete Kalman filtering within deep sequence models, yet remain restricted to linear-Gaussian dynamics. For time series exhibiting jump discontinuities, where jump times are totally inaccessible stopping times that cannot be predicted from the historical filtration \citep{protter2012stochastic, ankirchner2008filtration}, correctly updating latent beliefs demands a nonlinear filtering mechanism that current frameworks cannot provide.

In this work, we propose \textit{Deep ZakaiJ}, a framework that embeds the Zakai nonlinear filtering equation into a neural architecture for structured inference in partially observed jump-diffusion systems. Our main contributions are:
\begin{enumerate}
\item[(i)] A Zakai encoder that discretizes the Zakai equation via Strang splitting into three interpretable steps: prior propagation, diffusion innovation, and jump innovation. The jump innovation step performs likelihood reweighting against a mixture of no-jump and jump-occurred hypotheses, enabling the posterior to react sharply to dislocations. We prove that this encoder achieves first-order global accuracy relative to the exact Zakai evolution.
\item[(ii)] A structured jump-diffusion decoder that parameterizes drift, volatility, jump intensity, and jump-size distribution with neural networks conditioned on both the observation and the filtered latent belief, learning the predictable compensator so that the residual reduces to a zero-mean martingale, and coupling forecasting directly to the latent filtering state.
\item[(iii)] Experiments on synthetic, financial, and oceanographic datasets show that \textit{Deep ZakaiJ} improves distributional quality while remaining competitive in point accuracy, achieving well-calibrated predictive intervals near the nominal 90\% level and recovering interpretable latent structure in synthetic and qualitative case studies.
\end{enumerate}

\section{Related work}
Time series forecasting has evolved from classical methods such as ARIMA \citep{ho1998use} and LSTM \citep{yu2019review} to Transformer variants (e.g., PatchTST \citep{nie2023a}, iTransformer \citep{liu2024itransformer}), probabilistic models like DeepAR \citep{salinas2020deepar}, and pretrained foundation models (e.g., Chronos \citep{ansari2024chronos}, TimesFM \citep{das2024a}, Moirai-MoE \citep{liu2025moirai}, Lag-Llama \citep{rasul2023lagllama}). While increasingly powerful, these approaches typically operate in discrete time without explicit mechanisms for jump discontinuities or latent regime inference.

\textbf{Latent-state models.} Early variational approaches such as VRNN \citep{chung2015recurrent}, Deep Kalman Filters \citep{krishnan2015deep}, DVBF \citep{karl2017deep}, and KVAE \citep{fraccaro2017disentangled} introduce stochastic latent variables into sequential models but train posteriors via ELBOs rather than filtering equations. SRNN \citep{fraccaro2016sequential} stacks a deterministic RNN with a stochastic latent chain and uses a backward pass for smoothing, SNLDS \citep{dong2020collapsed} collapses discrete switching variables analytically and amortizes the continuous latent posterior to capture regime changes. Deterministic structured SSMs (S4 \citep{gu2022efficiently}, Mamba \citep{gu2024mamba}) achieve efficient long-range modeling but do not perform stochastic inference. Continuous-time extensions including Latent SDEs \citep{li2020scalable} and GRU-ODE-Bayes \citep{de2019gru} approximate the latent posterior through amortized variational inference, treating latent states as implicit representations. CRU \citep{schirmer2022modeling} and NCDSSM \citep{ansari2023neural} go further by embedding exact continuous-discrete Kalman filtering into deep models, demonstrating clear benefits of filtering, but remain restricted to linear-Gaussian dynamics. Particle-filter-based methods such as FIVO \citep{maddison2017filtering}, VSMC \citep{naesseth2018variational}, SIXO \citep{lawson2022sixo} provide asymptotically exact Bayesian inference through sequential Monte Carlo, while PF-RNN \citep{ma2020particle} embeds a differentiable particle filter directly into a recurrent cell. Recent extensions handle regime-switching systems \citep{brady2024regime}. However, these methods rely on sample-based approximations and do not exploit the continuous structure of filtering equations.

\textbf{Neural-jump models.} Neural Jump SDEs \citep{jia2019neural} model temporal point processes via piecewise-continuous latent trajectories where discrete events trigger state-dependent jumps. NJ-ODE \citep{herrera2021neural} learns the conditional expectation of a stochastic process continuously in time, with convergence guarantees, by evolving a hidden state via Neural ODEs \citep{chen2018neural} and applying jump updates at each irregular observation. NeuralMJD \citep{gao2025neural} parameterizes a non-stationary Merton jump-diffusion directly from data, and NJDTPP \citep{zhang2024neural} unifies intensity modeling through neural jump-diffusion SDEs. While NJ-ODE performs a form of implicit filtering through its jump updates, none of these models embeds an explicit nonlinear filtering equation over a latent state to jointly infer hidden regimes and forecast jump-diffusion dynamics.

\textbf{Nonlinear filtering.} The Zakai equation \citep{zakai1969optimal} provides an unnormalized formulation of the nonlinear filtering problem that is naturally amenable to operator splitting. The classical splitting method \citep{bensoussan1990approximation,gobet2006discretization,derchu2020bayesian} decomposes the Zakai PDE into independent substeps, with recent extensions handling joint diffusion and point-process observations \citep{zhang2024splitting}. The energy-based deep splitting method \citep{baagmark2023energy} approximates high-dimensional splitting substeps with neural networks. \textit{Deep ZakaiJ} builds on these numerical foundations but differs in purpose: rather than solving a filtering PDE in isolation, it embeds the discretized Zakai equation into an end-to-end architecture, jointly performing nonlinear filtering over a hidden regime and structured conditional jump-diffusion prediction.

\section{Background}
\label{sec:background}
Continuous-time stochastic models are widely used for sequential data. However, standard diffusion processes driven solely by Brownian motion cannot capture the abrupt dislocations frequently observed in practice. A natural extension is the jump-diffusion model \citep{black1973pricing,kou2002jump}
\begin{equation*}
\dd X_t
=
\mu(t,X_t)\dd t
+
\sigma(t,X_t)\dd W_t
+
\int_{\R}\gamma(t,z,X_{t^-})\,N(\dd t,\dd z),
\end{equation*}
where $W_t$ is a Brownian motion, the drift $\mu$ describes the local deterministic trend, and the diffusion coefficient $\sigma$ scales the continuous random fluctuations. $N(\dd t,\dd z)$ is a Poisson random measure with compensator $\nu(\dd z)\dd t$. Over a short interval, the Poisson term records whether a jump with mark $z$ occurs, the L\'evy measure $\nu$ determines the expected jump frequency and mark distribution, and $\gamma$ maps the jump mark to the realized state displacement. The diffusion term therefore explains continuous variation, while the Poisson integral captures rare but substantial moves \citep{kou2002jump}.

In practical forecasting problems, even the jump-diffusion dynamics of the observed series are rarely autonomous: they are often driven by regimes, stochastic volatility factors, or other latent mechanisms. This leads naturally to a partially observed coupled jump-diffusion, in which the observed process $X_t$ is driven by an unobserved state $\Theta_t$ and forecasting must combine state inference with forward simulation \citep{KurtzOcone1988FilteredMP,BandiniCossoFuhrmanPham2018-POC-BSDE},
\begin{align}
\dd X_t
&=
\mu\!\left(t,X_t,\Theta_t\right)\dd t
+ \sigma\!\left(t,X_t,\Theta_t\right)\dd W_t^{X,\mathbb P}
+ \int_{\R}\gamma\!\left(t,z,X_{t^-},\Theta_t\right)\widetilde N^{X,\mathbb P}(\dd t,\dd z),
\\
\dd \Theta_t
&=
a\!\left(t,\Theta_t\right)\dd t
+ b\!\left(t,\Theta_t\right)\dd W_t^{\Theta,\mathbb P}
+ \int_{\R}k\!\left(t,\Theta_{t^-},z\right)\widetilde N^{\Theta,\mathbb P}(\dd t,\dd z),
\label{eq:theta-dynamics}
\end{align}
where the dynamics are written under the physical measure $\mathbb P$, and $
\widetilde N^{X,\mathbb P}(\dd t,\dd z)=N^{X,\mathbb P}(\dd t,\dd z)-\nu^X(\dd z)\dd t$ and $
\widetilde N^{\Theta,\mathbb P}(\dd t,\dd z)=N^{\Theta,\mathbb P}(\dd t,\dd z)-\nu^\Theta(\dd z)\dd t.
$
The coefficients $a$, $b$, and $k$ govern the latent dynamics in a similar manner. Crucially, this formulation preserves the structural advantages of jump-diffusion modeling while operating in the partially observed regime. It thus serves as a powerful counterpart to recent neural jump-process architectures, which fundamentally restrict their scope to fully observed trajectories \citep{jia2019neural,zhang2024neural,gao2025neural}. Because $\Theta_t$ is not observable, forecasting cannot condition on the latent state directly. The relevant predictive state is instead its conditional law together with a low-dimensional summary statistic,
\begin{equation*}
\pi_t(\dd \theta)
=
\mathbb{P}\!\left(\Theta_t \in \dd \theta \mid \mathcal{F}_t^X\right),
\qquad
\beta_t
=
\int \varphi(\theta)\,\pi_t(\dd \theta),
\end{equation*}
where $\mathcal{F}_t^X=\sigma\{X_s:s\le t\}$ denotes the observation filtration. Here $\pi_t$ is the nonlinear filter, $\beta_t$ is the belief feature exposed to the forecasting model, and $\varphi$ is a smooth bounded test function, which can extract a low-dimensional summary of the posterior.
For jump-diffusion observations, it is often more convenient to change to a reference measure and then study the corresponding unnormalized conditional law \citep{KurtzOcone1988FilteredMP,CeciColaneri2014ZakaiJumps}. The next two standard results justify that construction.

\begin{proposition}[Change of measure and Zakai representation]
\label{prop:zakai-representation}
Assume there exist predictable processes $\theta_0(t)\in\R$ and $\theta_1(t,z)\in(-\infty,1)$ such that
\begin{equation*}
\sigma(t,X_t,\Theta_t)\,\theta_0(t)
+
\int_{\R}\gamma(t,z,X_{t^-},\Theta_t)\,\theta_1(t,z)\,\nu^X(\dd z)
=
\mu(t,X_t,\Theta_t).
\end{equation*}
Define the density process
\begin{equation*}
\begin{aligned}
Z_t
=
\exp\Bigg(
&-\int_0^t \theta_0(s)\,\dd W_s^{X,\mathbb P}
- \frac12\int_0^t \theta_0(s)^2\,\dd s +\int_0^t\int_{\R}\ln\!\bigl(1-\theta_1(s,z)\bigr)\,\widetilde N^{X,\mathbb P}(\dd s,\dd z) \\
&+\int_0^t\int_{\R}\bigl[\ln(1-\theta_1(s,z))+\theta_1(s,z)\bigr]\nu^X(\dd z)\,\dd s
\Bigg),
\end{aligned}
\end{equation*}
and let $\frac{\dd \mathbb Q}{\dd \mathbb P}\big|_{\mathcal F_t}=Z_t$. Then
$
\dd W_t^{X,\mathbb Q}
=
\dd W_t^{X,\mathbb P}+\theta_0(t)\,\dd t$ and $
\widetilde N^{X,\mathbb Q}(\dd t,\dd z)
=
\widetilde N^{X,\mathbb P}(\dd t,\dd z)+\theta_1(t,z)\,\nu^X(\dd z)\dd t$, 
so the observation dynamics become driftless under $\mathbb Q$:
\begin{equation*}
\dd X_t
=
\sigma(t,X_t,\Theta_t)\,\dd W_t^{X,\mathbb Q}
+
\int_{\R}\gamma(t,z,X_{t^-},\Theta_t)\,\widetilde N^{X,\mathbb Q}(\dd t,\dd z).
\end{equation*}
For any smooth bounded test function $\varphi$, the corresponding unnormalized conditional law is
\[
q_t(\varphi)
=
\E_{\mathbb Q}\!\left[
\varphi(\Theta_t) Z_t^{-1}\mid \mathcal F_t^X
\right].
\]
\end{proposition}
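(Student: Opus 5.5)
The plan is to verify the three assertions in turn: first that $Z$ is a genuine density process, then the Girsanov transformation of the drivers, and finally the Kallianpur--Striebel representation of the unnormalized law.

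\textbf{Step 1: $Z$ is a density process.} Write $Z_t=\mathcal E(L)_t$, the Dol\'eans-Dade exponential of the local martingale
\[
L_t=-\int_0^t\theta_0(s)\,\dd W_s^{X,\mathbb P}-\int_0^t\int_{\R}\theta_1(s,z)\,\widetilde N^{X,\mathbb P}(\dd s,\dd z).
\]
Apply It\^o's formula for jump processes to $\exp(\cdot)$ of the exponent in the statement. The continuous part produces $-\theta_0 Z_{t^-}\,\dd W$; the $-\tfrac12\theta_0^2$ term cancels the quadratic variation. At a jump with mark $z$, $Z$ is multiplied by $1-\theta_1$, and the compensator term $[\ln(1-\theta_1)+\theta_1]\nu^X$ exactly cancels the drift. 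This gives
\[
\dd Z_t=Z_{t^-}\Bigl(-\theta_0\,\dd W_t^{X,\mathbb P}-\int\theta_1\,\widetilde N^{X,\mathbb P}(\dd t,\dd z)\Bigr).
\]
The constraint $\theta_1<1$ keeps $Z>0$. Hence $Z$ is a positive local martingale, and therefore a supermartingale.

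To define $\mathbb Q$ one needs $\E_{\mathbb P}[Z_t]=1$. I would impose, or read into the standing "assume" of the proposition, a Novikov/L\'epingle--M\'emin type integrability condition, for instance boundedness of $\theta_0$ and of $\int\theta_1^2\,\nu^X(\dd z)$. This is the main obstacle, since the statement does not specify it explicitly. My first attempt would be to invoke the L\'epingle--M\'emin criterion, which gives true martingality on $[0,T]$.

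\textbf{Step 2: Girsanov.} By the Girsanov theorem for semimartingales with jumps (Jacod--Shiryaev), under $\mathbb Q$:
\begin{itemize}
\item the continuous martingale part acquires the drift $\dd\langle W,L\rangle_t=-\theta_0\,\dd t$, so $W^{X,\mathbb Q}=W^{X,\mathbb P}+\int\theta_0\,\dd t$ is a $\mathbb Q$-Brownian motion;
\item the compensator of $N^X$ becomes $(1-\theta_1)\nu^X(\dd z)\dd t$, so $\widetilde N^{X,\mathbb Q}=N^X-(1-\theta_1)\nu^X\dd t=\widetilde N^{X,\mathbb P}+\theta_1\nu^X\dd t$.
\end{itemize}
Substituting these into the $\mathbb P$-dynamics of $X$ adds $-\sigma\theta_0\,\dd t-\int\gamma\theta_1\,\nu^X(\dd z)\,\dd t$ to $\mu\,\dd t$. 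By the standing assumption this sum vanishes, which yields the driftless form.

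\textbf{Step 3: Zakai representation.} Since $Z_t>0$, $\mathbb P\ll\mathbb Q$ with $\dd\mathbb P/\dd\mathbb Q|_{\mathcal F_t}=Z_t^{-1}$. The abstract Bayes formula then gives
\[
\pi_t(\varphi)=\frac{\E_{\mathbb Q}[\varphi(\Theta_t)Z_t^{-1}\mid\mathcal F_t^X]}{\E_{\mathbb Q}[Z_t^{-1}\mid\mathcal F_t^X]}=\frac{q_t(\varphi)}{q_t(1)}.
\]
This identifies $q_t$ as the unnormalized conditional law. Its integrability follows from $\E_{\mathbb Q}[Z_t^{-1}]=1$ and the boundedness of $\varphi$.

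A secondary subtlety is whether $W^\Theta$ and $N^\Theta$ keep their laws under $\mathbb Q$. The density involves only the $X$-drivers, so if these are independent of the $\Theta$-drivers, the $\Theta$-dynamics are unchanged; I would state this independence explicitly.
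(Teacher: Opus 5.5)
Your proposal is correct and follows the same route as the paper's proof. Both establish positivity and, under integrability conditions, true martingality of the Dol\'eans exponential $Z$. Both then apply the Girsanov theorem for jump semimartingales to obtain $W^{X,\mathbb Q}$ and $\widetilde N^{X,\mathbb Q}$, cancel the drift via the matching condition, and finish with the abstract Bayes formula using $Z_t^{-1}$ as the reverse density.

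Your explicit check that $Z=\mathcal E(L)$ with $L=-\int\theta_0\,\dd W^{X,\mathbb P}-\int\!\int\theta_1\,\widetilde N^{X,\mathbb P}$ is a detail the paper omits. So are your computation of the $\mathbb Q$-compensator $(1-\theta_1)\nu^X(\dd z)\,\dd t$ and your L\'epingle--M\'emin sufficient condition; the paper covers all of these with the phrase ``standard integrability conditions.''
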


Regarding Equation~\eqref{eq:theta-dynamics}, the infinitesimal generator acting on a smooth bounded test function $\varphi$ is
\begin{equation*}
\begin{aligned}
(\mathcal L_t\varphi)(\theta)
=
&\,a(t,\theta)\,\partial_\theta\varphi(\theta)
+ \frac12 b^2(t,\theta)\,\partial_{\theta\theta}^2\varphi(\theta) \\
&+\int_{\R}
\Big[
\varphi\!\big(\theta+k(t,\theta,z)\big)
- \varphi(\theta)
- k(t,\theta,z)\,\partial_\theta\varphi(\theta)
\Big]\nu^\Theta(\dd z).
\end{aligned}
\end{equation*}
Under Proposition~\ref{prop:zakai-representation}, the unnormalized filter evolves in weak form as
\begin{equation}
\begin{aligned}
\dd q_t(\varphi)
&=
q_t\!\left(\mathcal L_t\varphi\right)\dd t
+
q_t(\varphi)\,\theta_0(t)\dd W_t^{X,\mathbb Q}
+
\int_{\R}
q_{t^-}(\varphi)\,\hat{\theta}_1(t,z)\,
\widetilde N^{X,\mathbb Q}(\dd t,\dd z),
\end{aligned}
\label{eq:zakai-weak}
\end{equation}
where $\theta_0$ is the diffusion likelihood-ratio coefficient from Proposition~\ref{prop:zakai-representation} and $\hat{\theta}_1 = \theta_1/(1-\theta_1)$ is the transformed jump coefficient.

\begin{theorem}[Existence and uniqueness of the jump-diffusion Zakai equation]
\label{thm:zakai-wellposed}
Under standard assumptions for jump-diffusion observations \citep{CeciColaneri2014ZakaiJumps}, there exists a unique solution to Zakai equation~\eqref{eq:zakai-weak} in the corresponding measure-valued class.
\end{theorem}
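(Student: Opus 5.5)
The plan has two parts: existence, by showing that the process $q_t$ of Proposition~\ref{prop:zakai-representation} actually solves \eqref{eq:zakai-weak}, and uniqueness, by reducing to uniqueness for the normalized filtering equation (the route of \citep{CeciColaneri2014ZakaiJumps}). The standing assumptions are those cited: Lipschitz and linear-growth coefficients $a,b,k,\mu,\sigma,\gamma$; bounded $\theta_0$ and $\theta_1$ with $\theta_1$ bounded away from $1$, so that $Z$ and $Z^{-1}$ are true martingales with moments of all orders; and $\nu^X,\nu^\Theta$ finite or with integrable second moments of the jump sizes.

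\emph{Existence.} Take $q_t(\varphi)=\E_{\mathbb Q}[\varphi(\Theta_t)Z_t^{-1}\mid\mathcal F_t^X]$ from Proposition~\ref{prop:zakai-representation}. The steps are as follows.
\begin{enumerate}
\item Apply It\^o's formula for jump processes to $\varphi(\Theta_t)$. This gives $\varphi(\Theta_0)+\int_0^t\mathcal L_s\varphi(\Theta_s)\,\dd s$ plus martingales driven by $W^\Theta$ and $\widetilde N^\Theta$.
\item Apply It\^o's formula to $Z_t^{-1}$. Under $\mathbb Q$ it satisfies a linear SDE driven by $W^{X,\mathbb Q}$ and $\widetilde N^{X,\mathbb Q}$; the jump coefficient $\theta_1/(1-\theta_1)=\hat\theta_1$ appears because $Z^{-1}$ jumps by the factor $1/(1-\theta_1)$.
\item Form the product $\varphi(\Theta_t)Z_t^{-1}$. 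The covariation terms vanish if $W^\Theta,N^\Theta$ are independent of $W^X,N^X$; this should be listed as an assumption.
\item Take $\mathbb Q$-conditional expectations given $\mathcal F_t^X$. Under $\mathbb Q$, $X$ is driftless and $\mathcal F^X$ is generated by $W^{X,\mathbb Q},N^X$, which are independent of the signal noise. Hence the signal-martingale terms project to zero, and the integrals against $W^{X,\mathbb Q}$ and $\widetilde N^{X,\mathbb Q}$ commute with conditioning.
\end{enumerate}
The commutation in the last step is a standard lemma: $\E[\int H\,\dd M\mid\mathcal F_t^X]=\int \E[H_s\mid\mathcal F_s^X]\,\dd M_s$ for $\mathcal F^X$-martingales $M$ and predictable integrable $H$. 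This yields \eqref{eq:zakai-weak}, with the coefficient products interpreted as $q_t(\varphi\,\theta_0)$ and $q_{t^-}(\varphi\,\hat\theta_1)$ when $\theta_0,\theta_1$ depend on $\Theta$. The moment bounds on $Z^{-1}$ guarantee that every integral is well defined and that $q_t$ is a finite-measure-valued càdlàg process.

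\emph{Uniqueness.} I will reduce to the normalized equation. Let $\tilde q$ be any measure-valued solution in the class, namely an $\mathcal F^X$-adapted càdlàg finite-measure-valued process with $\E\sup_{t\le T}\tilde q_t(1)^2<\infty$. Then $\tilde q_t(1)$ solves the scalar linear SDE obtained by setting $\varphi=1$, so it is strictly positive, because jump factors $1+\hat\theta_1=1/(1-\theta_1)>0$. Define $\tilde\pi_t=\tilde q_t/\tilde q_t(1)$. It\^o's quotient rule shows that $\tilde\pi$ satisfies the Kushner–Stratonovich equation with jumps. The key step is then a filtered martingale problem argument in the style of \citep{KurtzOcone1988FilteredMP}: any $\mathcal P(\R)$-valued solution of that equation, together with $X$, solves the filtered martingale problem for the generator of $(X,\Theta)$, and well-posedness of the SDE system forces $\tilde\pi=\pi$. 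Consequently $\tilde q=\pi\cdot\tilde q(1)$, and $\tilde q(1)$ is determined by $\pi$ through its linear SDE, so $\tilde q=q$.

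\emph{Main obstacle.} I expect the difficulty to be the uniqueness step, specifically the filtered martingale problem. It requires a measure-determining class of test functions on which $\mathcal L_t$ acts nicely; $C_c^\infty$ with the nonlocal jump term needs the growth control on $k$. It also requires handling the jump term $\hat\theta_1$ at observation jump times. As a fallback, I would try a direct linear argument. Take the difference $\delta_t=\tilde q_t-q_t$ of two solutions and test it against $P_{s,t}\varphi$, where $P$ is the signal's transition semigroup, giving a mild (Duhamel) form. Then apply Gronwall's lemma to $\E|\delta_t(P_{t,T}\varphi)|^2$ using boundedness of $\theta_0,\hat\theta_1$. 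This second route needs semigroup regularity but avoids martingale-problem machinery.
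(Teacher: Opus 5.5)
The paper gives no argument for this statement beyond a remark deferring to \citet{KurtzOcone1988FilteredMP,CeciColaneri2014ZakaiJumps}. Your overall plan is the route of that literature: Kallianpur--Striebel, It\^o and projection for existence; normalization plus Kushner--Stratonovich uniqueness via the filtered martingale problem for uniqueness.

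However, step 4 of your existence argument relies on a claim that is false in the model the statement refers to. You claim that under $\mathbb Q$ the filtration $\mathcal F^X$ is generated by $W^{X,\mathbb Q}$ and $N^X$, and that these are independent of the signal. This fails for three reasons:
\begin{itemize}
\item Under $\mathbb Q$ one still has $\dd X_t=\sigma(t,X_t,\Theta_t)\,\dd W^{X,\mathbb Q}_t+\int\gamma(t,z,X_{t^-},\Theta_t)\,\widetilde N^{X,\mathbb Q}(\dd t,\dd z)$.
\item The $\mathbb Q$-compensator of $N^X$ is $(1-\theta_1(t,z))\,\nu^X(\dd z)\,\dd t$, which depends on $\Theta$ through $\theta_1$.
\item Neither $W^{X,\mathbb Q}$ nor the marks of $N^X$ can be read off the path of $X$ when $\sigma$ and $\gamma$ depend on $\Theta$.
\end{itemize}
The projection lemma you quote does not hold for arbitrary $\mathcal F^X$-martingales. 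It needs that, under $\mathbb Q$, future observation increments carry no information about the past signal, i.e.\ $\E_{\mathbb Q}[\xi\mid\mathcal F^X_t]=\E_{\mathbb Q}[\xi\mid\mathcal F^X_s]$ for $\xi\in\mathcal F_s$, $s\le t$. No choice of $\theta_0,\theta_1$ can secure this. The reason is that $\langle X^c\rangle_t=\int_0^t\sigma^2(s,X_s,\Theta_s)\,\dd s$ is $\mathcal F^X$-adapted and invariant under equivalent changes of measure.

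Here is a concrete counterexample. Take $\Theta$ constant, $\mu=0$, no jumps, and $\theta\mapsto\sigma(\theta)>0$ injective. Then $\theta_0=\theta_1=0$ is admissible, so $\mathbb Q=\mathbb P$ and \eqref{eq:zakai-weak} reduces to $\dd q_t(\varphi)=0$, i.e.\ $q_t=q_0$. But $\E_{\mathbb Q}[\varphi(\Theta_t)Z_t^{-1}\mid\mathcal F^X_t]=\varphi(\Theta)$ for every $t>0$. So the process of Proposition~\ref{prop:zakai-representation} need not solve \eqref{eq:zakai-weak}.

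What is missing is the structural hypothesis of the cited setting, which your list of standing assumptions omits:
\begin{itemize}
\item The observation diffusion coefficient must be signal-free and nondegenerate, $\sigma=\sigma(t,x)\ge c>0$.
\item The jump information must enter through the observation's own jump measure $m^X(\dd t,\dd y)=\sum_s\mathbf 1_{\{\Delta X_s\neq0\}}\delta_{(s,\Delta X_s)}(\dd t,\dd y)$, which, unlike $N^X$, is $\mathcal F^X$-adapted.
\item $\mathbb Q$ must be chosen so that the $\mathbb Q$-compensator of $m^X$ is signal-free.
\end{itemize}
Under these hypotheses $X$ is independent of $\Theta$ under $\mathbb Q$. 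The drivers $\sigma^{-1}(t,X_t)\,\dd X^c_t$ and $m^X$ minus its compensator are then $\mathcal F^X$-adapted, and the projection step is legitimate. The innovation terms take the form $q_t(\varphi h)$ and $q_{t^-}(\varphi(\rho-1))$, with $h,\rho$ the diffusion and jump likelihood ratios; your correction to $q_t(\varphi\theta_0)$ was the right instinct.

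Your uniqueness argument needs the same repair. The Kushner--Stratonovich equation produced by the quotient rule, and the linear SDE that pins down $\tilde q(1)$ once $\tilde\pi=\pi$, are closed equations on $\mathcal F^X$ only if their drivers are observable.

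A smaller point: when $\theta_0$ depends on $\Theta$, setting $\varphi=1$ gives $\dd\tilde q_t(1)=\tilde q_t(\theta_0)\,\dd W^{X,\mathbb Q}_t+\cdots$, which is not linear in $\tilde q(1)$. Argue positivity instead by stopping before $\tilde q(1)$ hits zero and writing it as a stochastic exponential whose jump factors $\tilde\pi_{t^-}(1+\hat\theta_1)$ are positive.
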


Motivated by Proposition~\ref{prop:zakai-representation} and Zakai equation~\eqref{eq:zakai-weak}, we parameterize the latent belief by an unnormalized density $q_k(\theta)\approx q_{t_k}(\theta)$ on a fixed latent grid, and its normalized counterpart by $\pi_k(\theta)=q_k(\theta)/\int q_k(\vartheta)\,\mathrm{d}\vartheta$. This parametrization is crucial because the Zakai dynamics are multiplicative in the observation innovation, which makes them particularly amenable to operator splitting and differentiable likelihood reweighting.

\section{Methods}
\label{sec:method}
Given a past observation window $\{X_{t_0},\dots,X_{t_M}\}$, our goal is to infer the current latent belief state and then predict the future path $\{X_{t_{M+1}},\dots,X_{t_{M+N}}\}$ over a horizon of $N$ steps. On a discrete grid we write $\Delta X_k=X_{t_{k+1}}-X_{t_k}$, use the shorthand $\pi_k \equiv \pi_{t_k}$, and define $\beta_k=\int \varphi(\theta)\,\pi_k(\dd\theta)$. Since $\Theta_t$ is unobserved, our model operates through the belief summary $\beta_k$ and conditions on candidate grid values $\theta_j$ for filtering and forecasting. At a high level, the model consists of a Zakai encoder $\mathcal{E}_{\psi}$ that recursively updates the unnormalized posterior belief, and a structured jump-diffusion decoder that parameterizes the conditional increment distribution:
\begin{align*}
q_{k+1}
&=
\mathcal{E}_{\psi}\!\left(q_k,\Delta X_k; X_{t_k},\beta_k\right),
\\
p_{\phi}\!\left(\Delta X_{k+1}\mid X_{0:k}\right)
&=
\int
p_{\phi}\!\left(\Delta X_{k+1}\mid t_k,X_{t_k},\beta_k,\theta\right)\,
\pi_k(\theta)\,\dd \theta.
\end{align*}
This encoder–decoder decomposition is the key design principle of our framework. An overview of the full architecture is shown in Figure~\ref{fig:architecture}. It retains the uncertainty-aware structure of nonlinear filtering, but replaces analytically intractable operators with differentiable approximations that can be trained end-to-end.
\begin{figure}[t]
    \centering
    \includegraphics[width=0.85\linewidth]{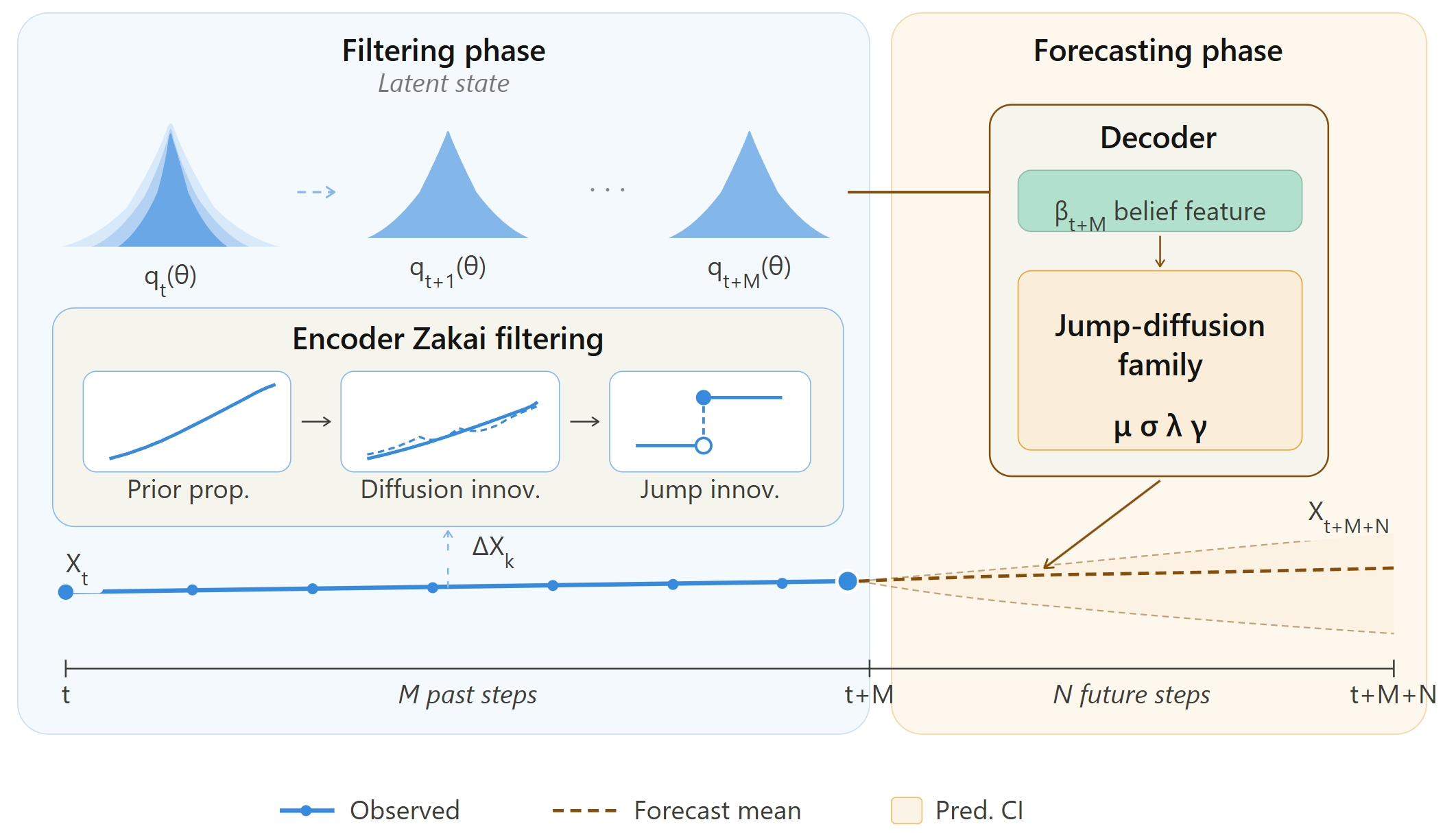}
    \caption{Overview of \textit{Deep ZakaiJ}. The encoder (left) updates the unnormalized posterior $q_t(\theta)$ over $M$ past steps via Strang splitting into prior propagation, diffusion innovation, and jump innovation. The resulting belief $\beta_{t+M}$ conditions a jump-diffusion decoder (right) for $N$-step forecasting.}
    \label{fig:architecture}
\end{figure}

\subsection{Deep ZakaiJ encoder}
\label{sec:encoder}
Our encoder is a discretization of the Zakai equation rather than an ad hoc recurrent state update. Inspired by splitting-up methods for nonlinear filtering and recent learned variants \citep{bensoussan1990approximation,zhang2024splitting,baagmark2023energy,spiteri2025beyond}, we decompose one update into latent prior propagation, diffusion innovation, and jump innovation. Let $h=\Delta t/2$. The resulting palindromic Strang-style update is
\begin{equation}
q_{k+1}
\approx
\mathcal{C}_{h}
\circ
\mathcal{B}_{h}
\circ
\mathcal{A}_{\Delta t}
\circ
\mathcal{B}_{h}
\circ
\mathcal{C}_{h}(q_k),
\label{eq:strang-update}
\end{equation}
where $\mathcal{A}_{\Delta t}$ propagates the latent prior, $\mathcal{B}_{h}$ incorporates the continuous part of the observation, and $\mathcal{C}_{h}$ handles jumps. This factorization makes the update interpretable and numerically stable: each substep targets one physical mechanism, and the innovation steps are likelihood reweightings that preserve positivity after normalization. Both $\mathcal{B}$ and $\mathcal{C}$ share the same observed increment $\Delta X_k$, reflecting the fact that the splitting decomposes a single observation likelihood into its continuous and jump components over half-steps rather than applying two separate likelihoods.



\textbf{$\mathcal{A}$-step: prior propagation.}
Let $\mathcal{L}_{\Theta}^{\ast}$ denote the forward operator associated with the latent process in \eqref{eq:theta-dynamics}. The exact prior evolution over one step is $S_{\Delta t}^{\mathcal{A}} = \exp(\Delta t \mathcal{L}_{\Theta}^{\ast})$, but in general it is not available in closed form. We therefore use a numerical propagator $\widetilde{S}_{\Delta t}^{\mathcal{A}}$ defined on a latent grid $\{\theta_j\}_{j=1}^G$:
\begin{equation*}
\widetilde{S}_{\Delta t}^{\mathcal{A}}[q](\theta_j)
=
\sum_{i=1}^{G}
K_{\Delta t}(\theta_j \mid \theta_i)\,q(\theta_i)\,\Delta \theta,
\qquad
K_{\Delta t}(\cdot \mid \theta_i)\ge 0,
\quad
\sum_{j=1}^{G}K_{\Delta t}(\theta_j \mid \theta_i)\Delta \theta = 1.
\end{equation*}
where \(K_{\Delta t}\) denotes the Markov transition kernel induced by a consistent discretization of the latent generator \(\mathcal{L}_{\Theta}^{\ast}\). To account for numerical bias and model mismatch, we add a learned residual correction $R_{\psi}^{\mathcal{A}}(q_k;\Delta t)$ constrained to have zero total mass, i.e., $\sum_{j=1}^{G} R_{\psi}^{\mathcal{A}}(q_k;\Delta t)(\theta_j)\,\Delta\theta = 0$. The $\mathcal{A}$-step output is
\begin{equation*}
q^{\mathcal{A}}(\theta_j) = \frac{\left[\widetilde{S}_{\Delta t}^{\mathcal{A}}[q_k](\theta_j) + R_{\psi}^{\mathcal{A}}(q_k;\Delta t)(\theta_j)\right]^{+}}{\sum_{l=1}^{G}\left[\widetilde{S}_{\Delta t}^{\mathcal{A}}[q_k](\theta_l) + R_{\psi}^{\mathcal{A}}(q_k;\Delta t)(\theta_l)\right]^{+}\Delta\theta},
\end{equation*}
where $[\cdot]^{+} = \max(\cdot, 0)$ clips negative values. This preserves the physical prior while allowing the encoder to learn systematic corrections that are useful for prediction.

\textbf{$\mathcal{B}$-halfstep: diffusion innovation.}
Conditioned on a candidate latent value $\theta$, the decoder specifies a local Gaussian approximation for the continuous part of the observed increment,
\begin{equation*}
\Delta X_k \mid \theta
\approx
\mathcal{N}\!\left(
\mu_{\phi}(t_k,X_{t_k},\beta_k,\theta)\Delta t,\,
\sigma_{\phi}^2(t_k,X_{t_k},\beta_k,\theta)\Delta t
\right).
\end{equation*}
Using a half-step $h=\Delta t/2$, we define the diffusion likelihood factor
\begin{equation*}
\ell_{\mathcal{B},k}(\theta_j)
=
\mathcal{N}\!\left(
\Delta X_k;\,
\mu_{\phi}(t_k,X_{t_k},\beta_k,\theta_j)h,\,
\sigma_{\phi}^2(t_k,X_{t_k},\beta_k,\theta_j)h
\right).
\end{equation*}
The $\mathcal{B}$-update is a Bayes reweighting:
\begin{equation*}
q^{\mathcal{B}}(\theta_j)
=
\frac{q^{\mathcal{A}}(\theta_j)\,\ell_{\mathcal{B},k}(\theta_j)}
{\sum_{l=1}^{G} q^{\mathcal{A}}(\theta_l)\,\ell_{\mathcal{B},k}(\theta_l)\,\Delta\theta}.
\end{equation*}
This step sharpens the posterior in regions of latent space that explain the continuous component of the observed move.

\textbf{$\mathcal{C}$-halfstep: jump innovation.}
To capture abrupt moves, we augment the diffusion likelihood with an at-most-one-jump approximation, valid for small $h$. Let $\lambda_{\phi}(t_k,X_{t_k},\beta_k,\theta_j)$ be the jump intensity and let $\gamma_{\phi}(t_k,z,X_{t_k},\beta_k,\theta_j)$ be the jump-size map for mark $z$. The half-step jump likelihood is
\begin{align}
\ell_{\mathcal{C},k}(\theta_j)
&=
e^{-\lambda_{\phi}h}
\mathcal{N}\!\left(
\Delta X_k;\,
\mu_{\phi}h,\,
\sigma_{\phi}^2 h
\right)
\nonumber\\
&\quad
+
h\,e^{-\lambda_{\phi}h}
\int
\mathcal{N}\!\left(
\Delta X_k;\,
\mu_{\phi}h+\gamma_{\phi}(t_k,z,X_{t_k},\beta_k,\theta_j),\,
\sigma_{\phi}^2 h
\right)\nu^X(\mathrm{d} z),
\label{eq:c-likelihood}
\end{align}
where, to lighten notation, the dependence of $\mu_{\phi}$, $\sigma_{\phi}$, and $\lambda_{\phi}$ on $(t_k,X_{t_k},\beta_k,\theta_j)$ is suppressed inside the Gaussian terms. The integral over  $\nu^X(\mathrm{d}z)$ is retained as it averages over jump sizes rather than the latent grid. The $\mathcal{C}$-update again takes the form of a normalized reweighting:
\begin{equation*}
q^{\mathcal{C}}(\theta_j)
=
\frac{q^\mathcal{B}(\theta_j)\,\ell_{\mathcal{C},k}(\theta_j)}
{\sum_{l=1}^{G} q^\mathcal{B}(\theta_l)\,\ell_{\mathcal{C},k}(\theta_l)\,\Delta\theta}.
\end{equation*}
This construction is simple but important: it lets the posterior react much more strongly to large observed dislocations than a purely Gaussian filter would, while keeping the update differentiable and numerically stable.

After the final innovation step, the updated belief feature is extracted over the latent grid as
\begin{equation*}
\beta_{k+1}
=
\sum_{j=1}^{G}
\varphi(\theta_j)\,
\frac{q_{k+1}(\theta_j)}{\sum_{l=1}^{G} q_{k+1}(\theta_l)\,\Delta\theta}\,
\Delta\theta.
\end{equation*}
The encoder is thus fully differentiable and remains interpretable at each step: $\mathcal{A}$ propagates prior belief, $\mathcal{B}$ explains continuous shocks, and $\mathcal{C}$ explains jump shocks.

\subsection{Structured decoder and joint training}
\label{sec:decoder}

The decoder specifies the conditional one-step law of future increments given the current observation, the belief feature, and a candidate latent state. A critical methodological choice is to constrain this law within a structured jump-diffusion family rather than replacing it with an unconstrained black-box transition density. This explicitly disentangles the continuous volatility from discontinuous shocks, and intrinsically ties the forecasting module to the same stochastic structure as the encoder. Relative to prior neural jump models \citep{jia2019neural,gao2025neural,zhang2024neural}, our decoder is fundamentally coupled to a dynamically filtered latent state rather than operating only on fully observed trajectories or event labels.

At each time step, the decoder outputs the local coefficients $\mu_{\phi}(t,x,\beta,\theta)$, $\sigma_{\phi}(t,x,\beta,\theta)$, $\lambda_{\phi}(t,x,\beta,\theta)$, and $\gamma_{\phi}(t,z,x,\beta,\theta)$. To balance expressivity with numerical stability, we parameterize these coefficients using a compact, context-conditioned neural network. Strict positivity constraints are imposed on the volatility and jump intensity, and the jump-size distribution is assigned a structured parameterization.

To formally manage the infinite-activity nature of general jump-diffusions within a discrete neural architecture, we rely on the following standard result for Lévy processes:

\begin{theorem}[Small-jump Gaussian approximation \citep{asmussen2001approximations}]
\label{thm:gaussian-approx}
Fix a truncation threshold $\varepsilon>0$ and decompose the jump measure into small- and large-jump parts. Over short horizons, the compensated small-jump component can be approximated by a Gaussian term with matching moments. In our decoder, this yields effective coefficients
\(
\tilde\mu_\phi, \tilde\sigma_\phi^2,
\)
which absorb the contribution of jumps with magnitude below $\varepsilon$.
\end{theorem}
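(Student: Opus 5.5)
Let $L^\varepsilon_t$ denote the compensated small-jump part of the observation noise at a fixed state $(t,x,\beta,\theta)$ held frozen over the step; everything reduces to the Asmussen--Rosi\'nski small-jump result applied to $L^\varepsilon_t$. Write $\nu_\varepsilon(\dd z)=\mathbf 1_{\{|\gamma_\phi(z)|<\varepsilon\}}\nu^X(\dd z)$ and $\nu^\varepsilon=\nu^X-\nu_\varepsilon$. By the L\'evy--It\^o decomposition, the increment over $[t,t+h]$ splits into four independent pieces: the drift, the Brownian part, $L^\varepsilon_h=\int_0^h\!\int \gamma_\phi(z)\,\widetilde N(\dd s,\dd z)$ restricted to $\nu_\varepsilon$, and a compound Poisson large-jump part with finite intensity $\lambda^\varepsilon=\nu^\varepsilon(\R)<\infty$. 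Finiteness of $\lambda^\varepsilon$ holds because $\int(1\wedge\gamma^2)\,\dd\nu^X<\infty$.

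\textbf{Step 1: moment matching.} $L^\varepsilon_h$ has mean zero (it is compensated) and variance $h\,s_\varepsilon^2$, where $s_\varepsilon^2=\int \gamma_\phi(z)^2\,\nu_\varepsilon(\dd z)$. We therefore set $\tilde\sigma_\phi^2=\sigma_\phi^2+s_\varepsilon^2$. If the paper's drift convention uses uncompensated large jumps, then $\tilde\mu_\phi=\mu_\phi-\int\gamma_\phi\,\dd\nu^\varepsilon$; otherwise $\tilde\mu_\phi=\mu_\phi$. The large-jump part is exactly what the $\mathcal C$-likelihood \eqref{eq:c-likelihood} models, with $\lambda_\phi=\lambda^\varepsilon$.

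\textbf{Step 2: Gaussian limit.} We need $L^\varepsilon_h/(\sqrt h\,s_\varepsilon)\Rightarrow\mathcal N(0,1)$. The small-time, fixed-$\varepsilon$ regime is only trivially Gaussian when a Brownian part is present, so we handle it as follows.
\begin{itemize}
\item If $\sigma_\phi>0$, the Wasserstein (or total variation) distance between $\sigma_\phi W_h+L^\varepsilon_h$ and $\mathcal N(0,\tilde\sigma^2_\phi h)$ is at most $C\,\E|L^\varepsilon_h|^3/(\sigma_\phi^3 h^{3/2})$.
\item The third moment satisfies $\E|L^\varepsilon_h|^3\lesssim h\int|\gamma|^3\,\dd\nu_\varepsilon+(h s_\varepsilon^2)^{3/2}\le h\varepsilon s_\varepsilon^2+(h s_\varepsilon^2)^{3/2}$.
\item This gives the relative error bound $O\bigl(\varepsilon s_\varepsilon^2/(\sigma^3\sqrt h)\bigr)+O(s_\varepsilon^3/\sigma^3)$. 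It vanishes as $\varepsilon\to0$, and the approximation is good whenever $\varepsilon\ll\sqrt h$.
\item When $\sigma_\phi=0$, we invoke the Asmussen--Rosi\'nski criterion: $s_\varepsilon^{-1}L^\varepsilon\Rightarrow W$ as $\varepsilon\to0$ if and only if $s_{\kappa s_\varepsilon\wedge\varepsilon}/s_\varepsilon\to1$ for every $\kappa>0$. The Berry--Esseen-type bound $\sup_x|\cdot|\le C\varepsilon/(s_\varepsilon\sqrt h)$ then follows from the Lyapunov ratio $\int|\gamma|^3\dd\nu_\varepsilon/(s_\varepsilon^3\sqrt h)\le \varepsilon/(s_\varepsilon\sqrt h)$.
\end{itemize}

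\textbf{Step 3: decoder.} By independence, the one-step law is a convolution of its pieces. Since convolution is a contraction in total variation and Wasserstein distance, replacing $\sigma_\phi W_h+L^\varepsilon_h$ by $\mathcal N(\tilde\mu_\phi h,\tilde\sigma^2_\phi h)$ inside both Gaussian terms of \eqref{eq:c-likelihood} incurs the same error. This justifies reading $\mu_\phi,\sigma_\phi$ there as the effective $\tilde\mu_\phi,\tilde\sigma_\phi$, and using a finite $\lambda_\phi$.

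\textbf{Main obstacle.} Step 2 is where the difficulty lies. The statement is informal ("can be approximated"), so we must choose a precise asymptotic regime. When $\sigma_\phi=0$, Gaussianity genuinely fails without the Asmussen--Rosi\'nski condition (for example, for finite-variation small jumps), and a rigorous version must state that condition as a hypothesis. A second issue is that the coefficients depend on the latent state $\theta$ and the time $t$. We handle this by freezing them over each step and controlling the extra error by local Lipschitz continuity of $\gamma_\phi$ in $(t,x)$, which contributes only $O(h)$.
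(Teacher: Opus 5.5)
Your proposal is correct, and its core matches the paper's argument. The paper's proof (the appendix ``Remark on Theorem~\ref{thm:gaussian-approx}'') consists of three moves. It notes that the compensated small-jump martingale $J^{(\varepsilon)}$ has conditional mean zero and conditional quadratic variation $\int_t^{t+h}\!\int_{|z|\le\varepsilon}\gamma_\phi^2\,\nu^X(\dd z)\,\dd s$. It asserts that over a short horizon this martingale ``can be replaced by a Gaussian increment with matching variance to first order.'' And it cites Asmussen--Rosi\'nski.

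Your Step~1 is exactly this moment matching. Two details differ. You truncate on the jump size $|\gamma_\phi|<\varepsilon$ rather than the mark $|z|\le\varepsilon$, which fits the statement's wording better. You also make the compensator shift in $\tilde\mu_\phi$ explicit, which the paper never does.

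Steps~2--3 go genuinely beyond the paper:
\begin{itemize}
\item a Lindeberg-type smoothing bound of order $\E|L^\varepsilon_h|^3/(\sigma_\phi\sqrt h)^3$ when $\sigma_\phi>0$;
\item a Berry--Esseen bound $C\varepsilon/(s_\varepsilon\sqrt h)$ when $\sigma_\phi=0$;
\item a convolution-contraction argument that transfers the error to both components of the $\mathcal C$-likelihood.
\end{itemize}

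The paper's route buys brevity and the modelling rationale. Yours buys an explicit regime of validity, and it exposes something the paper's wording glosses over. At fixed $\varepsilon$, both of your bounds grow like $\varepsilon/\sqrt h$ as $h\to0$. This is not an artifact: the pure-jump part satisfies $L^\varepsilon_h/\sqrt h\to0$ in probability, so the moment-matched Gaussian is \emph{not} the short-time limit. The replacement is therefore a statement about $\varepsilon$ being small relative to the diffusive scale $\sqrt h$ (an $\varepsilon\to0$ statement, as in Asmussen--Rosi\'nski), not a ``short horizon'' statement at fixed $\varepsilon$. You partly note this yourself.

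A few small corrections:
\begin{itemize}
\item Finite variation of the small jumps does not by itself make the Gaussian approximation fail. The measure $\nu(\dd z)=|z|^{-1-\alpha}\dd z$ with $\alpha\in(0,1)$ has finite variation, yet $s_\varepsilon/\varepsilon\to\infty$, so the Asmussen--Rosi\'nski condition holds. The gamma process, where $s_\varepsilon/\varepsilon\to1/\sqrt2$, is a correct counterexample.
\item The Berry--Esseen bound does not ``follow from'' that criterion. It is an independent quantitative bound, giving the sufficient condition $\varepsilon/(s_\varepsilon\sqrt h)\to0$.
\item Your $\sigma_\phi>0$ bound is dimensionless. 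State it in total variation (or for the rescaled increment), not in $W_1$.
\end{itemize}
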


By virtue of Theorem~\ref{thm:gaussian-approx}, the decoder absorbs infinite-activity micro-jumps into the effective continuous coefficients $\tilde{\mu}_{\phi}$ and $\tilde{\sigma}_{\phi}^2$. We therefore treat only macroscopic large jumps explicitly in the $\mathcal{C}$-step. During filtering over the observed history, the encoder applies the full $\mathcal{A}/\mathcal{B}/\mathcal{C}$ splitting update. In forecasting, future observations are strictly unavailable, meaning the diffusion and jump likelihoods become uninformative. Consequently, the posterior is propagated solely by the $\mathcal{A}$-step prior dynamics, and the belief feature is extracted from the resulting density. This constructs a coherent forward operator on the triple $(X_k, q_k, \beta_k)$ so that the predictive uncertainty naturally expands under the learned physical prior even when no new observations arrive.

Training is performed on sliding windows partitioned into a past context segment $X_{s:s+M}$ and a future prediction segment $X_{s+M:s+M+N}$ generated by Euler--Maruyama method. Given the context, the encoder sequentially produces latent posteriors and belief features. We optimize the encoder and decoder jointly by maximizing a stepwise filtering-and-forecasting objective:
\[
\mathcal{L}(\phi,\psi)
=
\sum_{k=0}^{M+N-1}
\mathbb{E}_{\pi_k(\theta)}
\!\left[
\log p_{\phi}(\Delta X_k \mid t_k, X_{t_k}, \beta_k, \theta)
\right]
-
\sum_{k=0}^{M-1}
\mathrm{KL}\!\left(
\pi_k \;\middle\|\; \pi_k^{\mathrm{prior}}
\right),
\]
Gradients are backpropagated through all Strang splitting operations and normalizations, ensuring the encoder shapes belief states that are maximally informative for downstream forecasting.

Proposition~\ref{prop:zakai-representation} and Theorem~\ref{thm:zakai-wellposed} establish the continuous-time filtering model, while Theorem~\ref{thm:gaussian-approx} justifies our treatment of the decoder's jump measure. What remains to be rigorously bounded is the structural approximation error introduced by the discrete split encoder itself.

\begin{assumption}[$L^1$-stability]
\label{ass:l1-stability}
There exists $L \ge 0$ such that for any normalized densities $q_1$ and $q_2$,
\begin{equation*}
\left\lVert
\mathcal{S}_{\Delta t}(q_1)-\mathcal{S}_{\Delta t}(q_2)
\right\rVert_1
\le
(1+L\Delta t)
\left\lVert q_1-q_2 \right\rVert_1,
\end{equation*}
where $\mathcal{S}_{\Delta t}$ denotes the exact one-step filtering semigroup.
\end{assumption}

\begin{theorem}[Local approximation error of the split encoder]
\label{thm:local-error}
Let $\widetilde{\mathcal{S}}_{\Delta t}$ denote the discrete update in Equation~\eqref{eq:strang-update} together with the at-most-one-jump approximation in Equation~\eqref{eq:c-likelihood}. Assume Assumption~\ref{ass:l1-stability} and suppose the intensity of jumps larger than $\varepsilon$ is uniformly bounded by $\Lambda_{\varepsilon}$. Then there exist constants $C_1,C_2>0$, independent of $\Delta t$, such that for any normalized input density $q$,
\begin{equation*}
\left\lVert
\widetilde{\mathcal{S}}_{\Delta t}(q)-\mathcal{S}_{\Delta t}(q)
\right\rVert_1
\le
C_1\Delta t^2
+
C_2(\Lambda_{\varepsilon}\Delta t)^2.
\end{equation*}
\end{theorem}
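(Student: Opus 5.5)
The plan is to decompose the one-step error into two independent pieces: a splitting error and a jump-truncation error. Let $\widehat{\mathcal S}_{\Delta t}$ denote the idealized split update, i.e.\ the same palindromic composition $\mathcal C_h\circ\mathcal B_h\circ\mathcal A_{\Delta t}\circ\mathcal B_h\circ\mathcal C_h$, but with each substep replaced by its exact sub-flow: exact prior semigroup $S^{\mathcal A}_{\Delta t}$, exact diffusion reweighting, and exact Poisson-mixture jump reweighting. The triangle inequality then gives
\begin{equation*}
\bigl\lVert \widetilde{\mathcal S}_{\Delta t}(q)-\mathcal S_{\Delta t}(q)\bigr\rVert_1
\le
\bigl\lVert \widetilde{\mathcal S}_{\Delta t}(q)-\widehat{\mathcal S}_{\Delta t}(q)\bigr\rVert_1
+
\bigl\lVert \widehat{\mathcal S}_{\Delta t}(q)-\mathcal S_{\Delta t}(q)\bigr\rVert_1 .
\end{equation*}
I will bound the second term by $C_1\Delta t^2$ and the first by $C_2(\Lambda_\varepsilon\Delta t)^2$, plus possibly an $O(\Delta t^2)$ contribution that is absorbed into $C_1$.

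\textbf{Splitting error.} The generator of the Zakai evolution \eqref{eq:zakai-weak}, written pathwise after the change of measure of Proposition~\ref{prop:zakai-representation}, is the sum of the prior part $\mathcal L^\ast_\Theta$ and two multiplicative observation parts. These are the diffusion likelihood factor coming from $\theta_0$ and the jump factor coming from $\hat\theta_1$. Well-posedness from Theorem~\ref{thm:zakai-wellposed} lets us write $\mathcal S_{\Delta t}$ as the exact flow of this sum. I will expand both $\mathcal S_{\Delta t}$ and the palindromic composition in a Baker--Campbell--Hausdorff / Taylor form. Symmetry cancels the first-order commutator terms. What remains is a residual whose size is controlled by commutators such as $[\mathcal L^\ast_\Theta, M_{\ell}]$, where $M_\ell$ denotes multiplication by the log-likelihood. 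These commutators are first-order differential operators in $\theta$ with bounded coefficients under the standard smoothness assumptions.

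Because the observation term is stochastic (of order $\Delta t^{1/2}$), I will not claim Strang's $\Delta t^3$. Instead I will keep only the guaranteed bound $C_1\Delta t^2$ in $L^1$. This is consistent with the first-order global accuracy claimed in the introduction. The normalizations in the $\mathcal A$, $\mathcal B$ and $\mathcal C$ steps are handled by the elementary inequality $\lVert q_1/\lVert q_1\rVert_1 - q_2/\lVert q_2\rVert_1\rVert_1\le 2\lVert q_1-q_2\rVert_1/\lVert q_2\rVert_1$, together with a lower bound on the mass after reweighting. Assumption~\ref{ass:l1-stability} is used to propagate substep perturbations through the remaining substeps with factor $(1+L\Delta t)$, so that they do not amplify.

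\textbf{Jump truncation.} By Theorem~\ref{thm:gaussian-approx}, jumps below $\varepsilon$ are already absorbed into $\tilde\mu_\phi$ and $\tilde\sigma_\phi^2$, so only large jumps with intensity at most $\Lambda_\varepsilon$ remain. Over a half-step $h$, the exact jump likelihood is the Poisson mixture $\sum_{n\ge0}e^{-\lambda h}(\lambda h)^n/n!\,p_n$, where $p_n$ is the density of the increment given $n$ jumps. The approximation \eqref{eq:c-likelihood} keeps the terms $n=0,1$. The discarded tail has total mass at most $1-e^{-\lambda h}(1+\lambda h)\le(\lambda h)^2/2\le(\Lambda_\varepsilon\Delta t)^2/8$, uniformly in $\theta_j$. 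Reweighting a normalized density by two likelihoods whose $L^1$ difference is of this order changes the normalized output by a comparable amount, again by the normalization inequality. The two $\mathcal C_h$ applications, propagated through the other substeps via Assumption~\ref{ass:l1-stability}, therefore contribute $C_2(\Lambda_\varepsilon\Delta t)^2$.

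\textbf{Main obstacle.} The hard step is controlling the normalizing constants uniformly. The reweighted mass $\sum_l q(\theta_l)\ell(\theta_l)\Delta\theta$ must be bounded below independently of $\Delta t$ and of the realized $\Delta X_k$, and this fails for extreme increments. I would first try to assume, as part of the standard assumptions, that $\sigma_\phi$ is bounded below and that the coefficients are bounded. That gives likelihood ratios across the grid bounded by a constant, and the normalization then costs only a constant factor. A related subtlety is how the learned residual $R^{\mathcal A}_\psi$ and the grid discretization fit the bound. I would treat them as part of the comparison operator, i.e.\ assume that $K_{\Delta t}$ together with $R^{\mathcal A}_\psi$ is consistent to order $\Delta t^2$ per step, and absorb that error into $C_1$.
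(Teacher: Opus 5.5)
\textbf{Your decomposition is the paper's, and so are its gaps.} The split is the same: an ideal Strang operator with exact jump innovation, term (I) quoted as $C_1\Delta t^2$, and term (II) bounded by $\mathbb{P}(N_h\ge 2)\le(\Lambda_\varepsilon h)^2/2$ and pushed through the chain. The paper writes $\mathcal{C}_h^{\mathrm{exact}}(q)=\mathbb{E}[q^{(N_h)}]$ with prior Poisson weights, so it has the same holes described below.

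\textbf{Truncation step.} The quantity $1-e^{-\lambda h}(1+\lambda h)$ is the mass of the discarded terms integrated over the increment variable. But $\mathcal{C}_h$ evaluates the likelihood at the single realized $\Delta X_k$. Write $w_n(\theta)=q(\theta)e^{-\lambda h}(\lambda h)^n p_n(\Delta X_k\mid\theta)/n!$ and $W_n=\int w_n$. The exact and truncated updates then differ in $L^1$ by at most $2\sum_{n\ge 2}W_n/\sum_{n\ge 0}W_n$, which is twice the \emph{posterior} probability of at least two jumps given $\Delta X_k$. Take a deterministic jump size $\gamma$ and $\Delta X_k\approx 2\gamma$. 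The $n\le 1$ terms are exponentially small in $1/h$, while the $n=2$ term is of order $h^{3/2}$. So the truncated update tilts $q$ by $\lambda$, where the exact one tilts it by $\lambda^2$. This is an $O(1)$ error whenever $\lambda$ varies over the support of $q$. Only the average over increments drawn from the model is $O((\Lambda_\varepsilon h)^2)$.

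Your remedy (bounded coefficients, $\sigma_\phi$ bounded below) does not fix this. First, it concerns likelihood ratios across $\theta$, not between jump-count hypotheses at fixed $\theta$. Second, it does not even bound the cross-$\theta$ ratios: $\theta$-dependent jump sizes make $\sup_\theta\ell_{\mathcal{C},k}/\inf_\theta\ell_{\mathcal{C},k}$ of order $e^{c/h}$. Also, Assumption~\ref{ass:l1-stability} concerns only the exact full step $\mathcal{S}_{\Delta t}$, so it gives no stability for the individual substeps you propagate through. What your argument can deliver is at best a bound on $\mathbb{E}\lVert\cdot\rVert_1$, not the pathwise bound in the statement.

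\textbf{Splitting step.} The paper's citation to Strang hides a second problem. Because your $\widehat{\mathcal{S}}$ uses exact diffusion sub-flows, $\lVert\widetilde{\mathcal{S}}_{\Delta t}(q)-\widehat{\mathcal{S}}_{\Delta t}(q)\rVert_1$ must also absorb the mismatch between the encoder's innovation factors and those sub-flows. You account only for truncation and the $\mathcal{A}$-discretization. As defined, $\ell_{\mathcal{B},k}$ evaluates the full increment $\Delta X_k$ against half-step moments and is applied twice, and the no-jump part of $\ell_{\mathcal{C},k}$ repeats the same Gaussian. On a no-jump step with $\theta$-independent $\sigma$, the composite $\theta$-dependent log-weight is then $4\mu\Delta X_k/\sigma^2-\mu^2\Delta t/\sigma^2$, apart from the correct factor $e^{-\lambda\Delta t}$. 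The Girsanov weight of Proposition~\ref{prop:zakai-representation} is instead $\mu\Delta X_k/\sigma^2-\mu^2\Delta t/(2\sigma^2)$. Whenever $\mu$ varies over the support of $q$, this moves the normalized output by order $|\Delta X_k|\asymp\Delta t^{1/2}$. That is a consistency error, and no symmetry or BCH cancellation can reduce it to $\Delta t^2$.

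\textbf{What would make the route work.} Three changes are needed:
\begin{enumerate}
\item Redefine the half-step factors so that they multiply to the one-step likelihood. For example, take $\mathcal{B}_h$ as the square root of $\mathcal{N}(\Delta X_k;\mu\Delta t,\sigma^2\Delta t)$, and $\mathcal{C}_h$ as the square root of the ratio of the full-step at-most-one-jump likelihood to that Gaussian.
\item Prove stability of each substep directly, rather than from Assumption~\ref{ass:l1-stability}.
\item State the truncation estimate in expectation over the observations.
\end{enumerate}
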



The first term is the classical Strang splitting error for continuous operators \citep{bensoussan1990approximation, strang1968construction}, the second arises from truncating multiple jumps per half-step, controlled by $\mathbb{P}(N_h \ge 2)=\mathcal{O}((\Lambda_{\varepsilon}h)^2)$. This bound covers the idealized operator under exact arithmetic. In practice the encoder also normalizes after each substep, but the following proposition shows this does not degrade the convergence order.
\begin{proposition}[Normalization stability]
\label{prop:norm-stability}
Let $q, \tilde{q}$ be nonnegative integrable functions with $\|q\|_1 > 0$. Then
\[
\left\lVert \frac{\tilde{q}}{\|\tilde{q}\|_1} - \frac{q}{\|q\|_1} \right\rVert_1
\le
\frac{2}{\|q\|_1}\left\lVert \tilde{q} - q \right\rVert_1.
\]
\end{proposition}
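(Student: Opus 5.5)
The plan is the standard add-and-subtract argument followed by the triangle inequality. Write $a=\|q\|_1>0$ and $b=\|\tilde q\|_1$. If $b=0$ then $\tilde q=0$ almost everywhere, and the right-hand side equals $\frac{2}{a}\,a=2$. The left-hand side is undefined in that case; it is either excluded by convention or, if we interpret $\tilde q/\|\tilde q\|_1$ as any probability density, bounded by $2$. So we may assume $b>0$. We insert the intermediate term $\tilde q/a$:
\[
\frac{\tilde q}{b}-\frac{q}{a}
=\tilde q\Bigl(\frac1b-\frac1a\Bigr)+\frac{\tilde q-q}{a}.
\]

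Next we bound the two pieces in $L^1$ separately. The second piece has norm exactly $\|\tilde q-q\|_1/a$. For the first piece, its $L^1$ norm is
\[
b\,\Bigl|\frac1b-\frac1a\Bigr|=\frac{|a-b|}{a}.
\]
By the reverse triangle inequality, $|a-b|=\bigl|\|q\|_1-\|\tilde q\|_1\bigr|\le\|\tilde q-q\|_1$, so the first piece is also at most $\|\tilde q-q\|_1/a$. Adding the two bounds gives the claimed factor $2/\|q\|_1$.

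Nonnegativity is not actually needed for the norm computations, because $\|\tilde q\|_1$ is the $L^1$ norm in either case. It only ensures that the normalized objects are densities, which is how the result is used. The only delicate point is the degenerate case $\|\tilde q\|_1=0$, handled above. In the paper's application the encoder outputs are strictly positive thanks to the Gaussian likelihood factors, so this case does not arise there. No earlier result of the paper is needed.
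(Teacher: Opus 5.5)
Your proof is correct, and it differs from the paper's in the choice of pivot term. The paper inserts $q/\|\tilde q\|_1$ and writes $\bar{\tilde q}-\bar q=(\tilde q-q)/\|\tilde q\|_1+q\bigl(1/\|\tilde q\|_1-1/\|q\|_1\bigr)$. This gives $2\|\tilde q-q\|_1/\|\tilde q\|_1$, with the \emph{perturbed} mass in the denominator. To replace $\|\tilde q\|_1$ by $\|q\|_1$, the paper then adds the hypothesis $\|\tilde q-q\|_1\le\|q\|_1/2$, which is not in the statement and is justified only by ``sufficiently small $\Delta t$''. Even then, the resulting bound $\|\tilde q\|_1\ge\|q\|_1/2$ only delivers the constant $4/\|q\|_1$, not the claimed $2/\|q\|_1$.

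You pivot on $\tilde q/\|q\|_1$ instead, so the reference mass sits in the denominator from the start. The reverse triangle inequality then gives exactly $2\|\tilde q-q\|_1/\|q\|_1$ whenever $\|\tilde q\|_1>0$, with no smallness assumption. Your argument therefore proves the proposition as stated; the paper's argument, as written, proves a weaker variant under an extra hypothesis.

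The paper's decomposition is the mirror image of yours, with the roles of $q$ and $\tilde q$ swapped. Taking the better of the two bounds gives $2\|\tilde q-q\|_1/\max(\|q\|_1,\|\tilde q\|_1)$. Your remarks that nonnegativity is never used and that the case $\|\tilde q\|_1=0$ needs a convention are also correct. In the paper, that degenerate case is excluded only implicitly, by its extra smallness hypothesis.
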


Combined with Theorem~\ref{thm:local-error}, the normalized encoder preserves first-order global accuracy provided the unnormalized mass $\|q\|_1$ remains bounded away from zero, a condition ensured in practice by positivity clipping. Remaining implementation details (finite grid, learned residuals) are validated empirically in Appendix~\ref{app:ablation}. A full proof is provided in Appendix~\ref{app:proof}.

\begin{corollary}[Global error over a fixed horizon]
\label{cor:global-error}
Fix $T>0$ and let $n=T/\Delta t$. Under the assumptions of Theorem~\ref{thm:local-error}, there exists a constant $C_T>0$, independent of $\Delta t$, such that
\begin{equation*}
\left\lVert
\widetilde{\mathcal{S}}_{\Delta t}^{\,n}(q)
-
\mathcal{S}_{\Delta t}^{\,n}(q)
\right\rVert_1
\le
C_T(1+\Lambda_{\varepsilon}^2)\Delta t.
\end{equation*}
\end{corollary}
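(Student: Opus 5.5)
The argument is the standard Lady Windermere's fan (telescoping) argument: write the global error as a sum of propagated local errors, bound each local error by Theorem~\ref{thm:local-error}, and bound its propagation by Assumption~\ref{ass:l1-stability}. Write $e_m = \|\widetilde{\mathcal{S}}_{\Delta t}^{\,m}(q)-\mathcal{S}_{\Delta t}^{\,m}(q)\|_1$ and set $\tilde q_m = \widetilde{\mathcal{S}}_{\Delta t}^{\,m}(q)$, $q_m=\mathcal{S}_{\Delta t}^{\,m}(q)$. By the triangle inequality,
\begin{equation*}
e_{m+1}
\le
\bigl\|\widetilde{\mathcal{S}}_{\Delta t}(\tilde q_m)-\mathcal{S}_{\Delta t}(\tilde q_m)\bigr\|_1
+
\bigl\|\mathcal{S}_{\Delta t}(\tilde q_m)-\mathcal{S}_{\Delta t}(q_m)\bigr\|_1 .
\end{equation*}
The first term is a one-step defect from the numerical state, which Theorem~\ref{thm:local-error} bounds by $\delta:=C_1\Delta t^2+C_2\Lambda_\varepsilon^2\Delta t^2$. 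The second term is the exact flow applied to two nearby inputs, which Assumption~\ref{ass:l1-stability} bounds by $(1+L\Delta t)e_m$. This gives the recursion $e_{m+1}\le(1+L\Delta t)e_m+\delta$ with $e_0=0$.

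Unrolling the recursion gives
\begin{equation*}
e_n\le \delta\sum_{m=0}^{n-1}(1+L\Delta t)^m\le n\,\delta\, e^{LT}= T e^{LT}\bigl(C_1+C_2\Lambda_\varepsilon^2\bigr)\Delta t ,
\end{equation*}
where we used $(1+L\Delta t)^m\le e^{L m\Delta t}\le e^{LT}$ and $n\Delta t=T$. (If $L>0$ one can sharpen $Te^{LT}$ to $(e^{LT}-1)/L$.) Taking $C_T=Te^{LT}\max(C_1,C_2)$ yields $e_n\le C_T(1+\Lambda_\varepsilon^2)\Delta t$, and $C_T$ is independent of $\Delta t$ because $C_1,C_2,L$ are.

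The main obstacle is the hypotheses of the two results being invoked. Both Theorem~\ref{thm:local-error} and Assumption~\ref{ass:l1-stability} are stated for normalized input densities. So at each step we must know that the intermediate iterate $\tilde q_m$ is itself a normalized density. It is also important that the defect is measured at $\tilde q_m$ and not at $q_m$; this ordering is what lets stability be applied only to the exact semigroup $\mathcal{S}_{\Delta t}$ and never to the numerical one.

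Normalization of $\tilde q_m$ holds because every substep of the encoder ends with an explicit normalization. If instead one works with the idealized unnormalized operator, I would insert a normalization after each step and control it with Proposition~\ref{prop:norm-stability}. Under the standing assumption that the mass stays bounded below by some $c>0$, this multiplies the local defect by at most $2/c$, which changes only the constant and not the order in $\Delta t$. Likewise, the constants $C_1,C_2$ must hold uniformly over all normalized inputs, and the statement of Theorem~\ref{thm:local-error} gives exactly this, so no further uniformity argument is needed.
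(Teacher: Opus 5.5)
Your proof is correct and follows the paper's argument essentially verbatim: the paper also measures the one-step defect at the numerical iterate $\tilde q_k$, applies Assumption~\ref{ass:l1-stability} only to the exact semigroup, unrolls the recursion $e_{k+1}\le(1+L\Delta t)e_k + C_1\Delta t^2 + C_2(\Lambda_\varepsilon\Delta t)^2$, and uses $(1+L\Delta t)^n\le e^{LT}$ with $n\Delta t=T$. Two of your additions go beyond the paper: the check that the iterates stay normalized, so that Theorem~\ref{thm:local-error} and the stability assumption apply at every step, and the explicit constant $C_T = Te^{LT}\max(C_1,C_2)$, where the paper leaves both implicit.
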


Thus, the discrete encoder is first-order accurate over a fixed horizon despite possessing a second-order local splitting defect, providing a controlled approximation to the Zakai evolution while remaining substantially easier to train than traditional solvers.

\section{Experiments}
\label{sec:experiments}

We evaluate \textit{Deep ZakaiJ} on one synthetic benchmark and two real-world datasets, covering both controlled settings and practical forecasting scenarios. Our goal is to assess not only predictive accuracy, but also the ability to capture stochastic dynamics and jump behaviors. Implementation details, preprocessing, and hyperparameters are deferred to Appendix~\ref{app:exp-details}.


\textbf{Baselines.}
We compare \textit{Deep ZakaiJ} with a diverse set of models spanning different modeling paradigms. First, we include DLinear \citep{Zeng2022AreTE} as a simple linear reference and classical sequence models LSTM \citep{yu2019review} and DeepAR \citep{salinas2020deepar}. Transformer-based forecasting is covered by PatchTST \citep{nie2023a}. To benchmark latent-state modeling, we include LatentODE \citep{rubanova2019latent}, LatentSDE \citep{li2020scalable} and NCDSSM \citep{ansari2023neural}; these models perform latent-state inference but are restricted to continuous or linear-Gaussian dynamics without explicit jump mechanisms. For neural jump-diffusion modeling, we compare against NJ-ODE \citep{herrera2021neural} and NeuralMJD \citep{gao2025neural}, which capture jump dynamics but operate on fully observed trajectories without latent-state filtering. Finally, we include Chronos \citep{ansari2024chronos} as a representative foundation model. Additional baselines are reported in Appendix~\ref{app:exp_benchmark}.

\textbf{Evaluation metrics.}
We evaluate both point and probabilistic forecasts. 
For point accuracy, we report MAE and RMSE. 
For probabilistic quality, we use CRPS \citep{gneiting2007strictly} and average log-likelihood, which assess distributional sharpness and calibration. 
We also report empirical 90\% coverage (Cov90) to measure calibration of predictive intervals. 
Lower values indicate better performance for MAE, RMSE, and CRPS, while higher log-likelihood is preferred; for Cov90, values closer to the nominal level (0.9) indicate better calibration.

\subsection{Synthetic Data}
\textbf{Data generation.}
We evaluate \textit{Deep ZakaiJ} on a synthetic partially observed jump-diffusion process, where both the observation drift and jump intensity depend linearly on the latent state. We simulate a trajectory of 20,000 time steps and construct samples using a sliding window: conditioning on the past 300 observations, we predict the next 100 steps with stride 100. The data is split into 60\% training, 20\% validation, and 20\% testing. Full details are discussed in Appendix~\ref{app:exp_data}.
\begin{figure}[t]
\centering
\setlength{\tabcolsep}{4.5pt}

\begin{minipage}[t]{0.45\linewidth}
\vspace{0pt}
\centering

\captionsetup{type=table}
\caption{Quantitative results on synthetic dataset}
\label{tab:synthetic}

\scriptsize
\resizebox{\linewidth}{!}{
\begin{tabular}{lcc|ccc}
\toprule
Model & MAE & RMSE & CRPS & LogLik & Cov90 \\
\midrule
DLinear & 0.4095 & 0.4807 & -- & -- & -- \\
LSTM & 0.1084 & 0.1533 & -- & -- & -- \\
DeepAR & 0.6370 & 0.6788 & 0.4111 & -1.09 & 35.0 \\
PatchTST & 0.1145 & 0.1730 & -- & -- & -- \\
LatentSDE & 0.1118 & 0.1523 & 0.0846 & -3.81 & 75.7 \\
LatentODE & 0.1383 & 0.1895 & 0.1025 & -0.36 & 91.6 \\
NCDSSM & 0.1081 & 0.1507 & 0.0851 & -1.25 & 83.5 \\
NJ-ODE & 0.1042 & 0.1565 & -- & -- & -- \\
NeuralMJD & 0.0921 & 0.1560 & 0.0803 & -5.45 & 65.1 \\
Chronos & 0.1402 & 0.1854 & 0.2136 & -16.23 & 71.4 \\
\midrule
Deep ZakaiJ & \textbf{0.0846} & \textbf{0.1344} & \textbf{0.0653} & \textbf{0.78} & \textbf{89.0} \\
\bottomrule
\end{tabular}
}

\end{minipage}\hspace{0.01\linewidth}
\begin{minipage}[t]{0.5\linewidth}
\vspace{0pt}
\centering

\includegraphics[width=\linewidth]{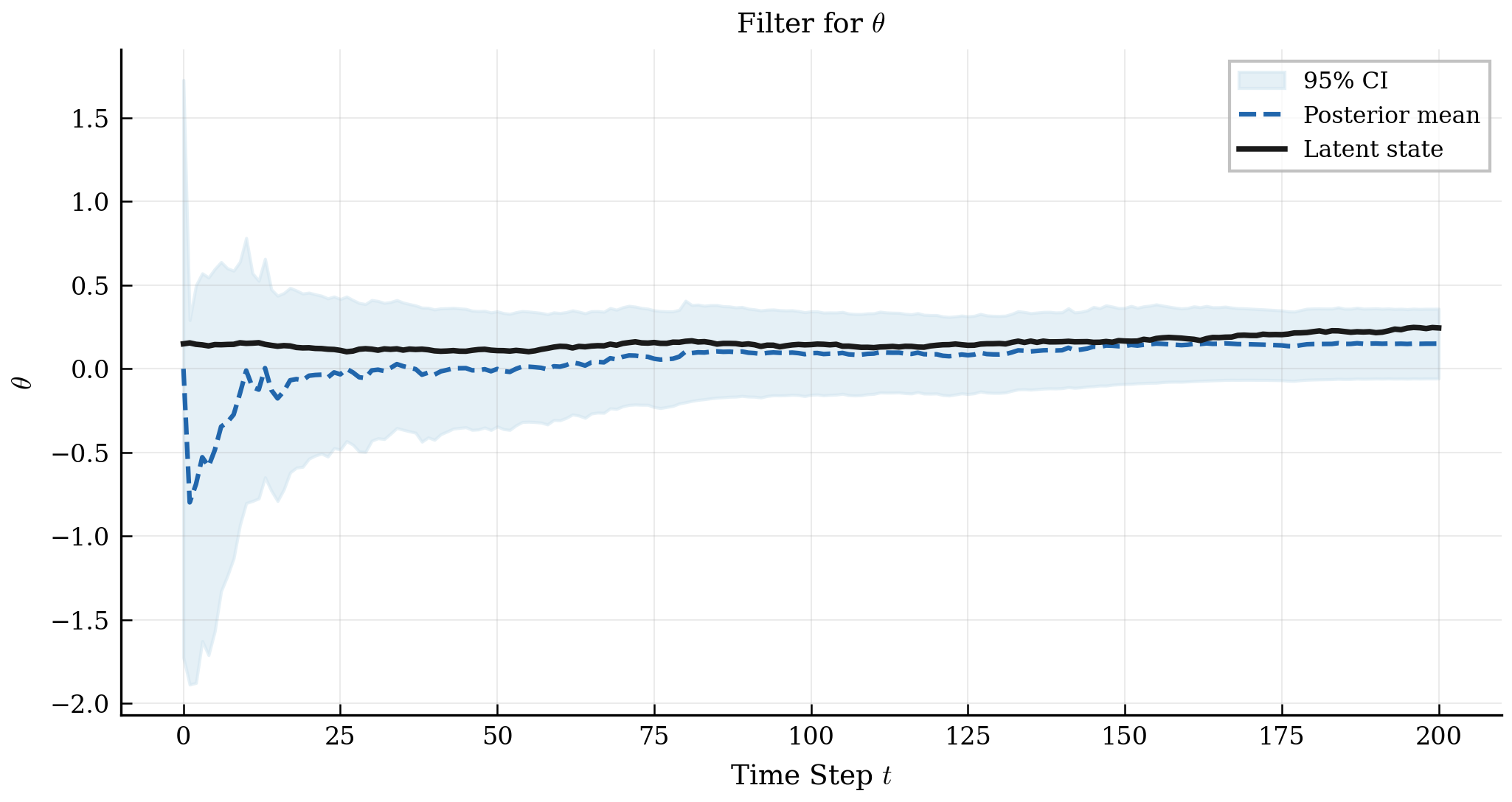}

\captionsetup{type=figure}
\caption{Filtering behavior on synthetic dataset}
\label{fig:theta}

\end{minipage}

\end{figure}

\textbf{Results.}
Since the data-generating mechanism is known, this synthetic setting tests \textit{Deep ZakaiJ} under a correctly specified linear decoder; polynomial and neural variants are deferred to later ablations. Table~\ref{tab:synthetic} shows that \textit{Deep ZakaiJ} achieves the best MAE, CRPS, and log-likelihood, while maintaining coverage close to the nominal 90\% level. Figure~\ref{fig:theta} further shows that the inferred filter tracks the latent state after a short initial transient. These results suggest that when the decoder is correctly specified, explicit filtering improves both state estimation and downstream forecasting.

\subsection{Real-world Data}
\textbf{Financial dataset (XAU/USD).}
This dataset comprises historical quotes for Gold against the U.S. Dollar, sourced from the Titan FX Research Hub \cite{titanfx_xauusd_2025}. To capture representative price movements, we resample the original one-minute OHLCV records into 10-minute intervals, focusing on the closing prices. We adopt a sliding window configuration with a lookback capacity of 300 ($M=300$) time steps and a prediction horizon of 100 ($N=100$) steps, utilizing a stride of 100 for sample generation. The data is partitioned chronologically to prevent temporal leakage: training (Jan.~2--Aug.~22, 2025), validation (Aug.~22--Oct.~10, 2025), and testing (Oct.~10--Nov.~28, 2025).

\textbf{Oceanographic dataset (NDBC Wave-Height).}
To evaluate model performance on physical environment, we utilize wave height (WVHT) observations from the NOAA/NDBC buoy station 44027 \cite{ndbc_44027_wvht_2023_2025}. This univariate series, measured in meters, spans from 2023 to 2025 at a 10-minute resolution. Consistent with the financial dataset, we implement a sliding window approach ($M=300, N=100$) with a stride of 100. The dataset is split into training (Jan.~2023--Feb.~2025), validation (Feb.--Jul.~2025), and testing (Jul.--Dec.~2025) sets, strictly following the chronological order of observations.

\begin{table}[t]
\centering
\caption{Quantitative results on the two real-world datasets.}
\label{tab:real_world}
\scriptsize
\resizebox{\linewidth}{!}{
\begin{tabular}{lcc|ccc|cc|ccc}
\toprule
& \multicolumn{5}{c|}{XAU/USD} & \multicolumn{5}{c}{NDBC Wave-Height} \\
\cmidrule(lr){2-6}\cmidrule(lr){7-11}
Model & MAE & RMSE & CRPS & LogLik & Cov90 (\%) & MAE & RMSE & CRPS & LogLik & Cov90 (\%) \\
\midrule
DLinear & 0.0097 & 0.0127 & -- & -- & -- & 0.3316 & 0.4401 & -- & -- & -- \\
LSTM & 0.0065 & 0.0095 & -- & -- & -- & 0.3387 & 0.4483 & -- & -- & -- \\
DeepAR & 0.1860 & 0.2249 & 0.1675 & -12.59 & 33.3 & 0.3909 & 0.5057 & 0.3285 & -7.25 & 37.4 \\
PatchTST & 0.0081 & 0.0114 & -- & -- & -- & 0.4096 & 0.5621 & -- & -- & -- \\
LatentSDE & 0.0063 & 0.0091 & 0.0050 & 0.62 & 55.6 & 0.3526 & 0.4628 & 0.2844 & -4.53 & 54.0 \\
LatentODE & 0.0103 & 0.0139 & 0.0075 & 2.62 & 69.7 & 0.5111 & 0.6147 & 0.3662 & -1.37 & 70.8\\
NCDSSM & 0.0063 & 0.0092 & 0.0048 & 2.92 & 77.1 & 0.3663 & 0.4858 & 0.2793 & -1.38 & 71.1\\
NJ-ODE & 0.0061 & 0.0088 & -- & -- & -- & 0.3398 & 0.4410 & -- & -- & -- \\
NeuralMJD & 0.0061 & 0.0089 & 0.0045 & 3.37 & 79.8 & 0.3038 & 0.4207 & 0.2414 & -2.06 & 54.4 \\
Chronos & 0.0062 & 0.0091 & 0.0064 & 0.65 & 72.3 & \textbf{0.3023} & 0.4142 & 0.2424 & -37.54 & 58.8 \\
\midrule
Deep ZakaiJ & \textbf{0.0056} & \textbf{0.0081} & \textbf{0.0041} & \textbf{3.65} & \textbf{92.9} & 0.3069 & \textbf{0.4084} & \textbf{0.2175} & \textbf{-0.38} & \textbf{85.3} \\
\bottomrule
\end{tabular}
}
\end{table}

\textbf{Results.} Table~\ref{tab:real_world} shows that lower values are better for MAE, RMSE, and CRPS, higher values are better for LogLik, and Cov90 is best when closest to 90. \textit{Deep ZakaiJ} improves distributional forecast quality across both domains while remaining competitive on point accuracy. On XAU/USD, it achieves the strongest performance among the evaluated baselines, notably outperforming classical SDE-based models, latent-state models and modern foundation models. On NDBC, while Chronos yields slightly lower point errors, its substantially worse LogLik and poor Cov90 ($58.8\%$) indicate weaker uncertainty calibration under oceanographic dynamics. In contrast, \textit{Deep ZakaiJ} provides a well-calibrated predictive law (Cov90: $85.3\%$), effectively bridging the gap between point accuracy and probabilistic reliability.

We further examine whether \textit{Deep ZakaiJ} learns latent structure aligned with plausible financial regimes. As shown in Fig.~\ref{fig:real_xau_theta}, the filtered belief $\theta$ co-evolves with the observed price but is not a simple echo, it shifts ahead of several trend reversals and reacts sharply to the dislocation after step $-50$, where the jump innovation concentrates posterior mass on a low regime. This illustrates how the Zakai encoder extracts a continuous, temporally coherent latent signal that conditions the downstream forecast. This pattern suggests that the filtered belief helps \textit{Deep ZakaiJ} adapt after the transition, although we treat the real-data regime interpretation as qualitative evidence rather than independent validation of the underlying market mechanism (Fig.~\ref{fig:real_xau_forecast}). For completeness, additional baseline results and ablation studies are in Appendices~\ref{app:exp_benchmark} and~\ref{app:ablation}, respectively.

\begin{figure}[t]
\centering
\begin{subfigure}[t]{0.45\linewidth}
\centering
\includegraphics[width=\linewidth]{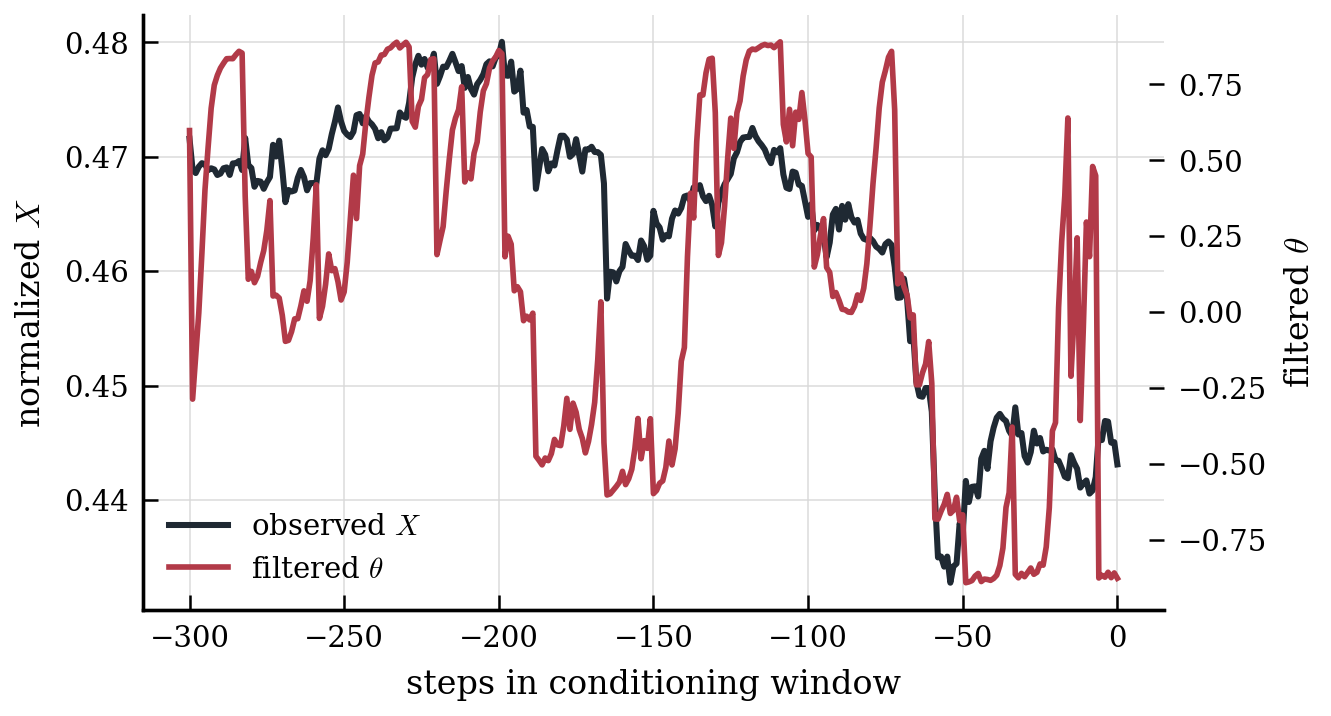}
\caption{XAU/USD conditioning window.}
\label{fig:real_xau_theta}
\end{subfigure}\hfill
\begin{subfigure}[t]{0.45\linewidth}
\centering
\includegraphics[width=\linewidth]{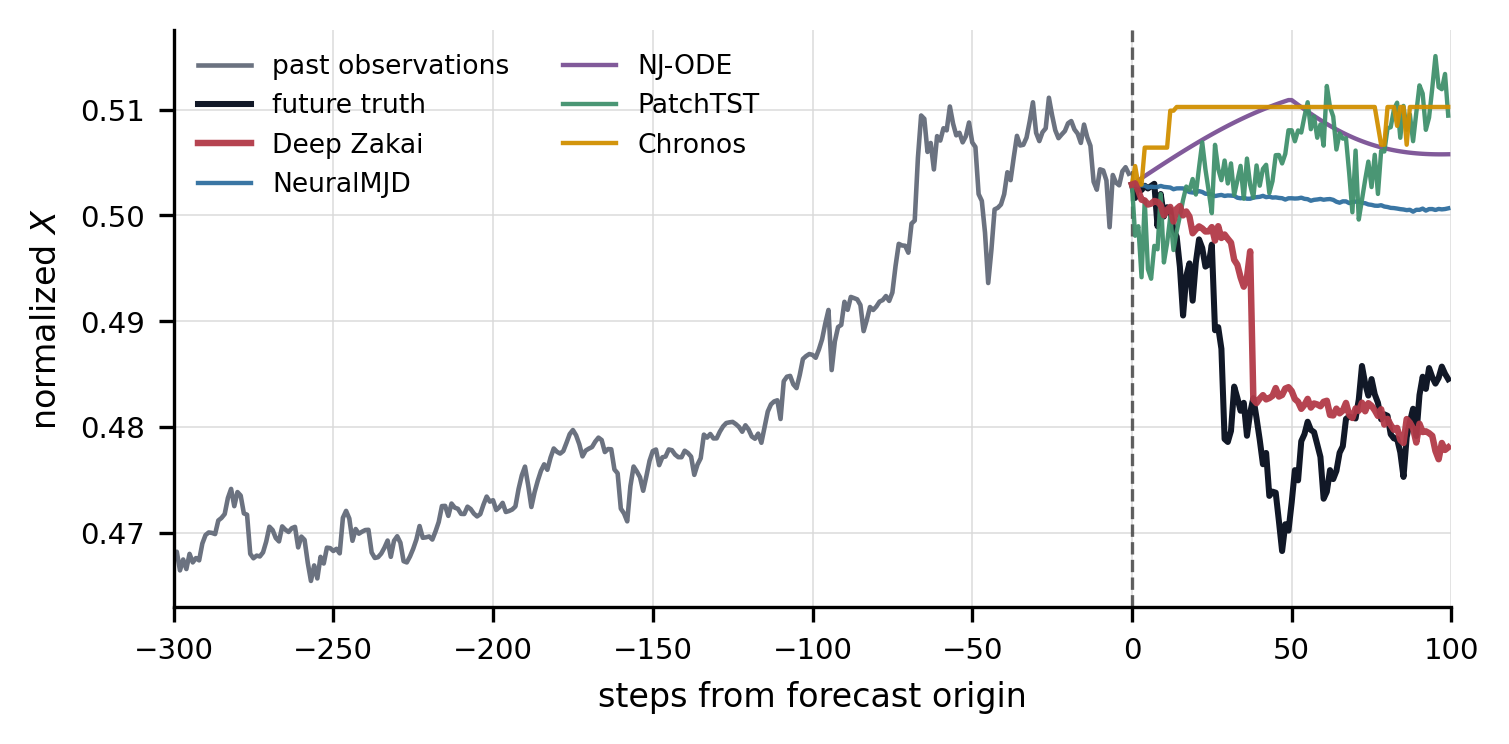}
\caption{XAU/USD 100-step forecast.}
\label{fig:real_xau_forecast}
\end{subfigure}
\caption{Qualitative examples on XAU/USD. Left: the filtered $\theta$ changes state within the observed conditioning window. Right: conditioned on the past 300 observations, \textit{Deep ZakaiJ} and selected baselines are compared over the next 100 steps.}
\label{fig:real_data_cases}
\end{figure}

\section{Conclusion}
We introduced \textit{Deep ZakaiJ}, a latent-state model for partially observed jump-diffusion systems that embeds the Zakai nonlinear filtering equation into an end-to-end trainable neural architecture. By maintaining an explicit belief over the hidden regime through principled filtering, the model separates latent-state inference from structured forecasting, learns latent representations that are interpretable in synthetic and qualitative case studies, and produces well-calibrated predictive intervals under structural shifts. Experiments show that this filtering-driven approach performs strongly among the evaluated baselines, particularly in uncertainty quantification. Future work includes extending to high-dimensional latent states.

\bibliographystyle{unsrtnat}
\bibliography{references}

\newpage
\appendix

\section{Proofs and auxiliary results}\label{app:proof}

This appendix contains the proofs and derivations omitted from the main text. Section~\ref{app:change-measure} and \ref{app:zakai} justifies the change of measure and the weak Zakai equation. Remarks on Theorems~\ref{thm:zakai-wellposed} and \ref{thm:gaussian-approx} are given in Sections~\ref{app:remark-wellposed} and \ref{app:remark-gauss}. Section~\ref{app:approx-local} and \ref{app:approx-global} proves the approximation errors in Theorem~\ref{thm:local-error} and Corollary~\ref{cor:global-error}.

\subsection{Proof of Proposition~\ref{prop:zakai-representation}}\label{app:change-measure}
\begin{proof}
    Let \(L_t = Z_t^{-1}\). Under the standard integrability conditions, the Dol\'eans exponential \(Z_t\) is a strictly positive uniformly integrable martingale, and \(1-\theta_1(t,z)>0\) ensures the jump likelihood ratio is well defined. By the Girsanov theorem for semimartingales with Brownian and jumps, the measure \(\mathbb Q\) defined by \(\dd\mathbb Q/\dd\mathbb P|_{\mathcal F_t}=Z_t\) is equivalent to \(\mathbb P\) on each \(\mathcal F_t\), and
\[
W_t^{X,\mathbb Q} = W_t^{X,\mathbb P} + \int_0^t \theta_0(s)\,\dd s,
\]
\[
\widetilde N^{X,\mathbb Q}(\dd t,\dd z) = \widetilde N^{X,\mathbb P}(\dd t,\dd z) + \theta_1(t,z)\,\nu^X(\dd z)\dd t.
\]
Substituting these into the \(\mathbb P\)-dynamics of \(X_t\) gives a drift term
\[
\mu(t,X_t,\Theta_t) - \sigma(t,X_t,\Theta_t)\theta_0(t) - \int_{\mathbb R}\gamma(t,z,X_{t^-},\Theta_t)\theta_1(t,z)\,\nu^X(\dd z),
\]
which vanishes by the matching condition in the proposition. Hence under \(\mathbb Q\),
\[
\dd X_t = \sigma(t,X_t,\Theta_t)\,\dd W_t^{X,\mathbb Q} + \int_{\mathbb R}\gamma(t,z,X_{t^-},\Theta_t)\,\widetilde N^{X,\mathbb Q}(\dd t,\dd z).
\]
For any bounded measurable \(\varphi\), Bayes' rule yields
\[
\mathbb E_{\mathbb P}[\varphi(\Theta_t)\mid \mathcal F_t^X] = \frac{\mathbb E_{\mathbb Q}[\varphi(\Theta_t)L_t\mid \mathcal F_t^X]}{\mathbb E_{\mathbb Q}[L_t\mid \mathcal F_t^X]},
\]
so the unnormalized filter is \(q_t(\varphi) = \mathbb E_{\mathbb Q}[\varphi(\Theta_t)L_t\mid \mathcal F_t^X]\).    
\end{proof}

\subsection{Derivation of the weak Zakai equation \eqref{eq:zakai-weak}}\label{app:zakai}
Under \(\mathbb Q\), the reciprocal likelihood satisfies
\[
\dd L_t = L_{t^-}\theta_0(t)\,\dd W_t^{X,\mathbb Q} + \int_{\mathbb R} L_{t^-}\,\frac{\theta_1(t,z)}{1-\theta_1(t,z)}\,\widetilde N^{X,\mathbb Q}(\dd t,\dd z).
\]
Write \(\hat{\theta}_1(t,z) = \theta_1(t,z)/(1-\theta_1(t,z))\) for the effective jump coefficient of \(L_t\). Since \(\hat{\theta}_1\) is again a bounded predictable process, the subsequent algebra is identical. The latent process \(\Theta_t\) evolves under \(\mathbb Q\) with generator \(\mathcal L_t\) unchanged from \(\mathbb P\). For a smooth bounded test function \(\varphi\),
\[
\varphi(\Theta_t) = \varphi(\Theta_0) + \int_0^t (\mathcal L_s\varphi)(\Theta_s)\,\dd s + M_t^\varphi,
\]
where \(M^\varphi\) is a \(\mathbb Q\)-martingale orthogonal to the observation noises. By It\^o's product rule for semimartingales,
\[
\dd(\varphi(\Theta_t)L_t) = \varphi(\Theta_{t^-})\,\dd L_t + L_{t^-}\,\dd\varphi(\Theta_t) + \dd[\varphi(\Theta),L]_t.
\]
The covariation \([\varphi(\Theta),L]\) vanishes because \(\Theta\) and \(W^{X,\mathbb{Q}}\) are independent under \(\mathbb{Q}\), and joint jumps contribute only through the Poisson term already captured by the \(\hat{\theta}_1\) reweighting. Applying the product rule to \(\varphi(\Theta_t)L_t\) and taking conditional expectation with respect to \(\mathcal F_t^X\) gives
\[
\dd q_t(\varphi) = q_t(\mathcal L_t\varphi)\,\dd t + q_t(\varphi)\theta_0(t)\,\dd W_t^{X,\mathbb Q}
+ \int_{\mathbb R} q_{t^-}(\varphi)\,\hat{\theta}_1(t,z)\,\widetilde N^{X,\mathbb Q}(\dd t,\dd z),
\]
which is \eqref{eq:zakai-weak}.

\subsection{Remark on Theorem~\ref{thm:zakai-wellposed}}\label{app:remark-wellposed}

Theorem~\ref{thm:zakai-wellposed} is a classical result. Under standard boundedness, Lipschitz, and positivity conditions on the coefficients and on the likelihood-ratio factors, the weak Zakai equation admits a unique c\`adl\`ag measure-valued solution; see \citet{KurtzOcone1988FilteredMP,CeciColaneri2014ZakaiJumps}. We state it as a structural fact without reproducing the proof.

\subsection{Remark on Theorem~\ref{thm:gaussian-approx}}\label{app:remark-gauss}

Theorem~\ref{thm:gaussian-approx} is a known approximation result for L\'evy processes \citep{asmussen2001approximations}. For a truncation level \(\varepsilon>0\), the compensated small-jump martingale

\[
J_t^{(\varepsilon)} = \int_0^t\int_{|z|\le \varepsilon} \gamma_{\phi}(s,z,X_{s^-},\beta_s,\theta)\,\widetilde N^X(\dd s,\dd z)
\]

has conditional mean zero and conditional quadratic variation \(\int_t^{t+h}\int_{|z|\le \varepsilon} \gamma_{\phi}^2\,\nu^X(\dd z)\dd s\). Over a short horizon \(h\), it can be replaced by a Gaussian increment with matching variance to first order. This justifies absorbing micro-jumps into the effective coefficients \(\tilde\mu_{\phi},\tilde\sigma_{\phi}^2\) in the decoder, treating only jumps larger than \(\varepsilon\) explicitly. This approximation justifies the decoder design and is independent of the encoder convergence analysis in Theorem~\ref{thm:local-error}.

\subsection{Proof of Theorem~\ref{thm:local-error}}\label{app:approx-local}
\begin{proof}
Let $h=\Delta t/2$ and define the ideal Strang update
\[
\mathcal{S}_{\Delta t}^{\mathrm{Strang}}
:=
\mathcal{C}_{h}^{\mathrm{exact}}
\circ
\mathcal{B}_{h}
\circ
\mathcal{A}_{\Delta t}
\circ
\mathcal{B}_{h}
\circ
\mathcal{C}_{h}^{\mathrm{exact}},
\]
where $\mathcal{C}_{h}^{\mathrm{exact}}$ denotes the exact jump innovation operator without the $0$--$1$ truncation. We decompose
\[
\left\lVert
\widetilde{\mathcal{S}}_{\Delta t}(q)-\mathcal{S}_{\Delta t}(q)
\right\rVert_1
\le
\underbrace{
\left\lVert
\mathcal{S}_{\Delta t}^{\mathrm{Strang}}(q)-\mathcal{S}_{\Delta t}(q)
\right\rVert_1
}_{\mathrm{(I)}}
+
\underbrace{
\left\lVert
\widetilde{\mathcal{S}}_{\Delta t}(q)-\mathcal{S}_{\Delta t}^{\mathrm{Strang}}(q)
\right\rVert_1
}_{\mathrm{(II)}}.
\]

\textbf{(I) Splitting error.}
Under the regularity assumptions ensuring the $L^1$-stability of the filtering semigroup, Strang splitting is second-order accurate for the continuous operators; see \citep{strang1968construction,bensoussan1990approximation}. Therefore,
\[
\mathrm{(I)} \le C_1 \Delta t^2
\]
for a constant $C_1>0$ independent of $\Delta t$.

\textbf{(II) Truncation error.}
Let $N_h$ be the number of jumps larger than $\varepsilon$ over a half-step. By assumption,
\[
N_h \sim \mathrm{Poisson}(\Lambda_{\varepsilon} h).
\]
The approximate operator $\widetilde{\mathcal{C}}_h$ coincides with $\mathcal{C}_{h}^{\mathrm{exact}}$ whenever $N_h \le 1$ and differs only on the event $\{N_h \ge 2\}$. Writing both operators as expectations over jump counts,
\[
\mathcal{C}_{h}^{\mathrm{exact}}(q)=\E[q^{(N_h)}],
\qquad
\widetilde{\mathcal{C}}_h(q)=\E[q^{(\min(N_h,1))}],
\]
and using that each conditional update remains a normalized density, we obtain
\[
\left\lVert
\mathcal{C}_{h}^{\mathrm{exact}}(q)-\widetilde{\mathcal{C}}_h(q)
\right\rVert_1
\le
2\,\mathbb{P}(N_h \ge 2).
\]
For a Poisson random variable,
\[
\mathbb{P}(N_h \ge 2)
=
1-e^{-\Lambda_{\varepsilon} h}(1+\Lambda_{\varepsilon} h)
\le
\frac{(\Lambda_{\varepsilon} h)^2}{2}.
\]
With $h=\Delta t/2$, this gives
\[
\left\lVert
\mathcal{C}_{h}^{\mathrm{exact}}-\widetilde{\mathcal{C}}_h
\right\rVert_{L^1}
\le
C_2(\Lambda_{\varepsilon}\Delta t)^2
\]
for a constant $C_2>0$. Since the intermediate operators $\mathcal{A}_{\Delta t}$ and $\mathcal{B}_h$ are $L^1$-stable, propagating this defect through the Strang chain preserves the same order, so
\[
\mathrm{(II)} \le C_2(\Lambda_{\varepsilon}\Delta t)^2.
\]
Combining (I) and (II) proves Theorem~\ref{thm:local-error}.
\end{proof}

\subsection{Proof of Proposition~\ref{prop:norm-stability}}\label{app:norm-stability}
\begin{proof}
Write $\bar{q} = q/\|q\|_1$ and $\bar{\tilde{q}} = \tilde{q}/\|\tilde{q}\|_1$. Then
\[
\bar{\tilde{q}} - \bar{q}
=
\frac{\tilde{q} - q}{\|\tilde{q}\|_1}
+
q\left(\frac{1}{\|\tilde{q}\|_1} - \frac{1}{\|q\|_1}\right).
\]
Taking $L^1$ norms and using $\|q/\|\tilde{q}\|_1\|_1 = \|q\|_1/\|\tilde{q}\|_1$,
\[
\|\bar{\tilde{q}} - \bar{q}\|_1
\le
\frac{\|\tilde{q} - q\|_1}{\|\tilde{q}\|_1}
+
\frac{\|q\|_1}{\|\tilde{q}\|_1}
\cdot
\frac{\bigl|\|\tilde{q}\|_1 - \|q\|_1\bigr|}{\|q\|_1}.
\]
Since $\bigl|\|\tilde{q}\|_1 - \|q\|_1\bigr| \le \|\tilde{q} - q\|_1$, the second term is bounded by $\|\tilde{q} - q\|_1 / \|\tilde{q}\|_1$. Hence
\[
\|\bar{\tilde{q}} - \bar{q}\|_1
\le
\frac{2\,\|\tilde{q} - q\|_1}{\|\tilde{q}\|_1}.
\]
Finally, $\|\tilde{q}\|_1 \ge \|q\|_1 - \|\tilde{q} - q\|_1$. For $\|\tilde{q} - q\|_1 \le \|q\|_1/2$ (which holds for sufficiently small $\Delta t$), we have $\|\tilde{q}\|_1 \ge \|q\|_1/2$, giving
\[
\|\bar{\tilde{q}} - \bar{q}\|_1
\le
\frac{2}{\|q\|_1}\|\tilde{q} - q\|_1.
\]
\end{proof}

\subsection{Proof of Corollary~\ref{cor:global-error}}\label{app:approx-global}
\begin{proof}
Let $q_k := \mathcal{S}_{\Delta t}^{\,k}(q)$ and $\tilde q_k := \widetilde{\mathcal{S}}_{\Delta t}^{\,k}(q)$. Define the one-step defect
\[
e_{k+1}
:=
\widetilde{\mathcal{S}}_{\Delta t}(\tilde q_k)
-
\mathcal{S}_{\Delta t}(\tilde q_k).
\]
Then
\[
\tilde q_{k+1}-q_{k+1}
=
\mathcal{S}_{\Delta t}(\tilde q_k)-\mathcal{S}_{\Delta t}(q_k)+e_{k+1}.
\]
By Assumption~\ref{ass:l1-stability} and Theorem~\ref{thm:local-error},
\[
\left\lVert \tilde q_{k+1}-q_{k+1} \right\rVert_1
\le
(1+L\Delta t)\left\lVert \tilde q_k-q_k \right\rVert_1
+
C_1\Delta t^2
+
C_2(\Lambda_{\varepsilon}\Delta t)^2.
\]
Iterating this recursion with $\tilde q_0=q_0$ yields
\[
\left\lVert \tilde q_n-q_n \right\rVert_1
\le
\sum_{j=0}^{n-1}
(1+L\Delta t)^{n-1-j}
\bigl(
C_1\Delta t^2 + C_2(\Lambda_{\varepsilon}\Delta t)^2
\bigr).
\]
Using $(1+L\Delta t)^n \le e^{LT}$ and $n=T/\Delta t$, we obtain
\[
\left\lVert
\widetilde{\mathcal{S}}_{\Delta t}^{\,n}(q)
-
\mathcal{S}_{\Delta t}^{\,n}(q)
\right\rVert_1
\le
C_T(1+\Lambda_{\varepsilon}^2)\Delta t
\]
for some constant $C_T>0$ depending on $T$ but independent of $\Delta t$. This proves Corollary~\ref{cor:global-error}.
\end{proof}

\section{Experimental details and additional results}
\label{app:exp-details}
This appendix collects experimental details that are intentionally omitted from the main paper.

\subsection{Datasets, source references, and preprocessing}
\label{app:exp_data}
\textbf{Synthetic dataset.}
The synthetic dataset is generated from the scalar partially observed jump-diffusion
\[
\dd \Theta_t=\kappa(\bar{\theta}-\Theta_t)\dd t+\sigma_\theta \dd W_t^\Theta,
\qquad
\dd X_t=a_1\Theta_t\,\dd t+\sigma_X \dd W_t^X+c_X\dd N_t,
\qquad
\lambda_t=(b_1\Theta_t)^+,
\]
Since the observation law is known by construction, this dataset is mainly used to test latent-state recovery and controlled decoder misspecification.

\textbf{Real datasets.}
We use two real-world datasets to evaluate the proposed method beyond the controlled synthetic setting. The first dataset is a financial time series, XAU/USD, obtained from the Titan FX Research Hub \cite{titanfx_xauusd_2025}  subject to its Historical Data Terms of Use. These terms allow use for market analysis but prohibit redistribution to third parties. We therefore cite the source and describe the preprocessing procedure, but do not redistribute the raw XAU/USD records. The NDBC wave-height data are publicly accessible from NOAA/NDBC, and we provide the station identifier and access information above. We start from one-minute OHLCV records and use the close price trajectory as the observed process, and the full raw trajectory is shown in Figure~\ref{fig:appendix_xau_data}. The second dataset is a non-financial environmental time series, significant wave height (WVHT), collected from NOAA/NDBC buoy station 44027 \cite{ndbc_44027_wvht_2023_2025}; its full raw trajectory is shown in Figure~\ref{fig:appendix_wave_data}.

These two datasets are meant to cover rather different types of dynamics. The XAU/USD series has clear directional movements over long stretches, together with local reversals and sudden price moves, which effectively tests the model's adaptation to latent regime changes and directional shifts. The wave-height series, on the other hand, does not show a clear long-term trend, but has clustered variability, intermittent bursts, and sharp spikes, where the main difficulty is to represent uncertainty under bursty and heteroskedastic behavior.

\begin{figure}[t]
\centering
\begin{subfigure}[t]{0.48\linewidth}
\centering
\includegraphics[width=\linewidth]{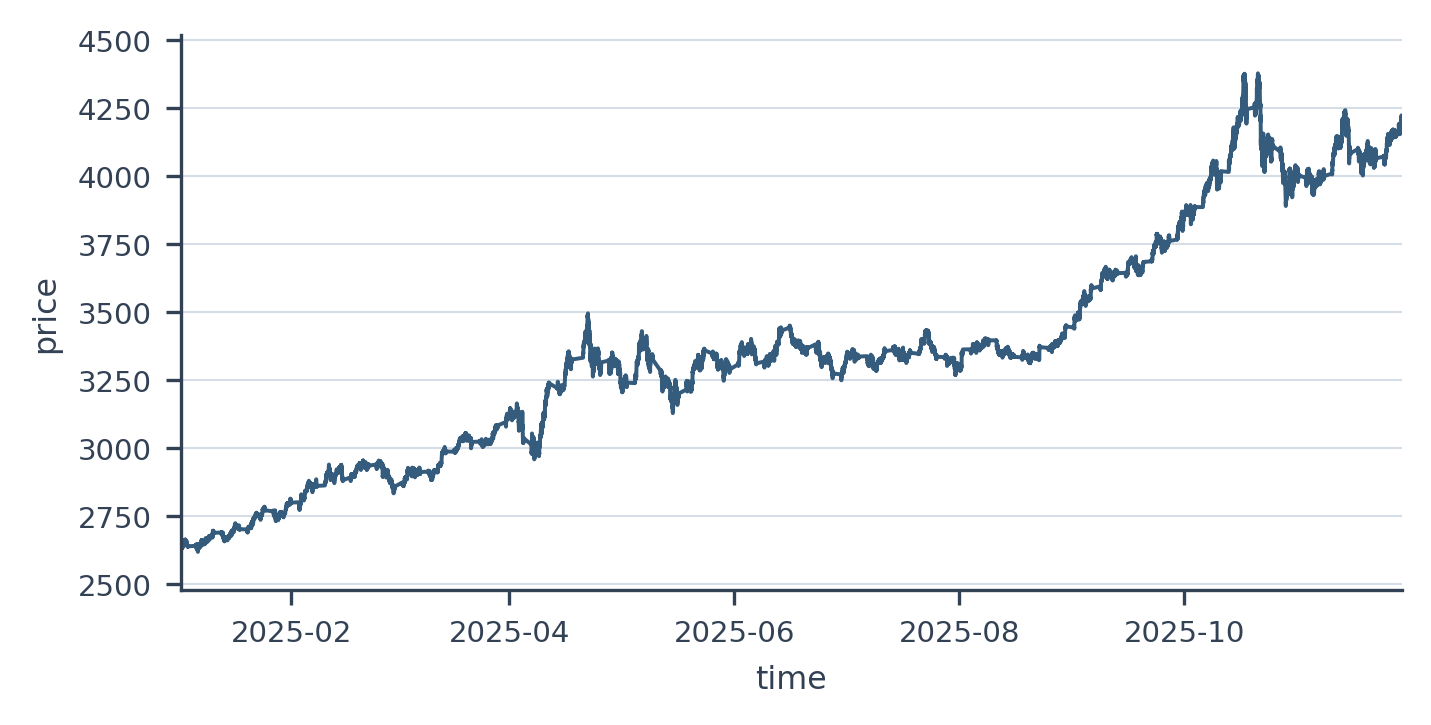}
\caption{Gold price series.}
\label{fig:appendix_xau_data}
\end{subfigure}\hfill
\begin{subfigure}[t]{0.48\linewidth}
\centering
\includegraphics[width=\linewidth]{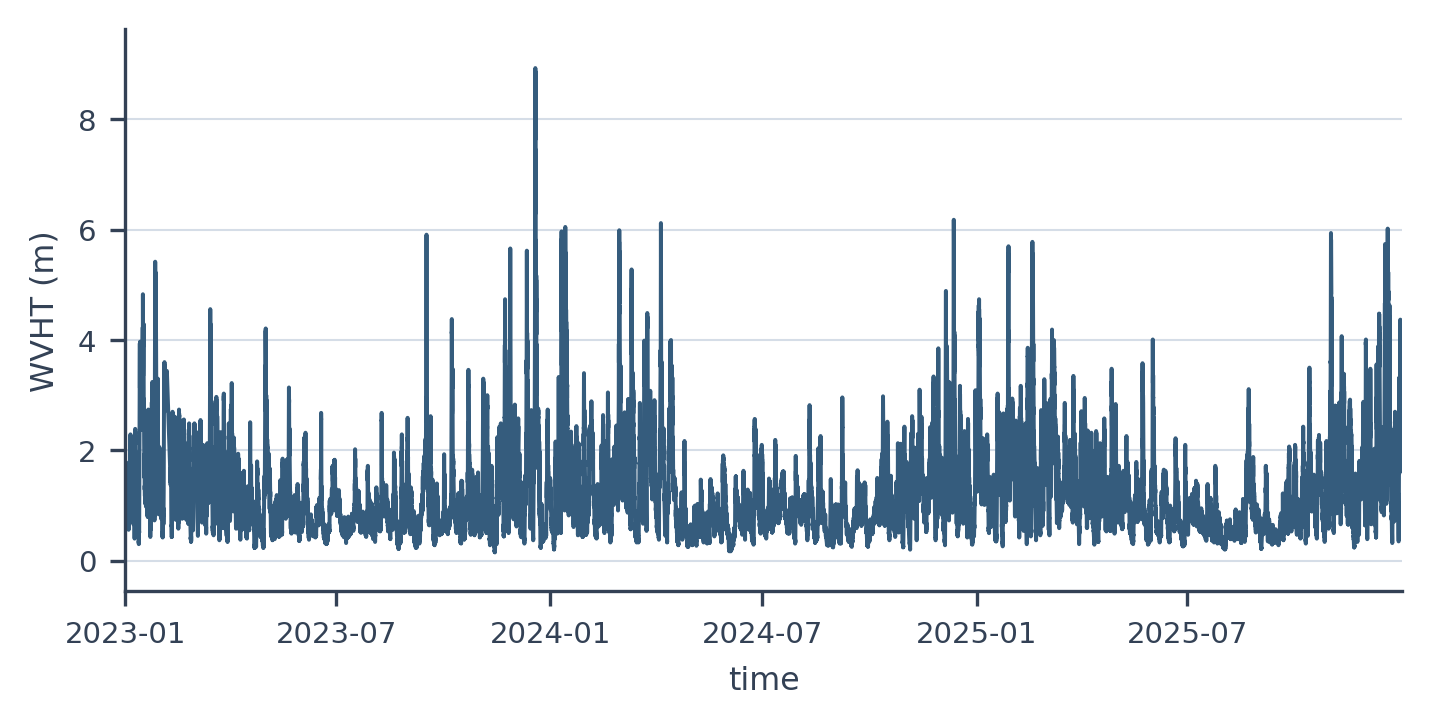}
\caption{Wave-height series.}
\label{fig:appendix_wave_data}
\end{subfigure}
\caption{Full raw series for the two real datasets used in the paper.}
\label{fig:appendix_real_data}
\end{figure}

\textbf{Preprocessing.}
For the synthetic benchmark, we keep the simulated trajectories in their original scale without any transformation. For the real-world datasets, we first extract a single regularly sampled positive series $S_t$. Specifically, we resample the raw one-minute XAU/USD feed to obtain the 10-minute close, and use the WVHT channel directly for the NDBC dataset. We then apply a relative log transformation:
\[
X_t = \log S_t - \log S_0,
\]
where $S_0$ is the first observation of the corresponding series. Next, we construct overlapping context-target windows matching the setup in the main text. All train/validation/test splits are strictly chronological to prevent future data leakage. Note that while Figure~\ref{fig:appendix_real_data} plots the raw series $S_t$ for readability, all model training and evaluation are performed entirely on the transformed series $X_t$.

\subsection{Training protocol and evaluation setup}
\label{app:exp_train}
\renewcommand{\thealgorithm}{B.\arabic{algorithm}}
\textbf{Forecasting setup.}
We do not repeat the window construction and chronological splits here; these are already specified in Section~\ref{sec:experiments} and Appendix~\ref{app:exp_data}. For reproducibility, all experiments use the common forecasting protocol $M=300$, $N=100$, stride $100$, and seed $42$. The time increment is $\Delta t=0.01$ on the synthetic benchmark and $\Delta t=10/1440\approx 0.00694$ days on the real-world datasets. The synthetic series is kept in its native scale, while the real-world datasets are modeled in the relative log-space $X_t=\log S_t-\log S_0$.

\begin{algorithm}[t]
\caption{Training of \textit{Deep ZakaiJ}.}
\label{alg:training}
\small
\begin{algorithmic}[1]
\Require Encoder parameters $\psi$, decoder parameters $\phi$, latent grid $\{\theta_j\}_{j=1}^G$, context length $M$, horizon $N$.
\For{each window $(X_{s:s+M},\, X_{s+M:s+M+N})$}
    \State Initialize $q_0$ to a uniform density on $\{\theta_j\}_{j=1}^G$.
    \For{$k=0,\dots,M-1$}
        \State $q_{k+1} \gets \mathcal{C}_h \circ \mathcal{B}_h \circ \mathcal{A}_{\Delta t} \circ \mathcal{B}_h \circ \mathcal{C}_h(q_k)$ \Comment{Eq.~\ref{eq:strang-update}}
        \State $\pi_{k+1} \gets q_{k+1} / \|q_{k+1}\|_1$;\quad $\beta_{k+1} \gets \sum_j \varphi(\theta_j)\,\pi_{k+1}(\theta_j)\,\Delta\theta$
    \EndFor
    \State Maximize $\mathcal{L}(\phi,\psi)$ w.r.t.\ $\phi,\psi$ via gradient ascent.
\EndFor
\end{algorithmic}
\end{algorithm}

\begin{algorithm}[t]
\caption{Inference of \textit{Deep ZakaiJ}.}
\label{alg:inference}
\small
\begin{algorithmic}[1]
\Require Trained parameters $(\psi,\phi)$, test context $X_{s:s+M}$, horizon $N$, number of rollouts $S$.
\State Run the encoder (Algorithm~\ref{alg:training}, lines 2--6) on $X_{s:s+M}$ to obtain $(q_M, \beta_M, X_{t_M})$.
\For{$m=1,\dots,S$}
    \For{$n=0,\dots,N-1$}
        \State Sample $\theta^{(m)} \sim \pi_{M+n}(\cdot)$ from the normalized belief.
        \State $(\mu_\phi,\,\sigma_\phi,\,\lambda_\phi,\,\gamma_\phi) \gets f_\phi(t_{M+n},\, X_{t_{M+n}}^{(m)},\, \beta_{M+n},\, \theta^{(m)})$
        \State Sample $J_{n+1} \sim \mathrm{Poisson}(\lambda_\phi \Delta t)$; if $J_{n+1} \ge 1$, draw jump size from $\nu^X$.
        \State $X_{t_{M+n+1}}^{(m)} \gets X_{t_{M+n}}^{(m)} + \mu_\phi\,\Delta t + \sigma_\phi\sqrt{\Delta t}\,\xi_{n+1} + \gamma_\phi\, J_{n+1}$, \quad $\xi_{n+1}\sim\mathcal{N}(0,1)$
        \State $q_{M+n+1} \gets \mathcal{A}_{\Delta t}(q_{M+n})$ \Comment{Prior propagation only}
    \EndFor
\EndFor
\State \textbf{Output:} Ensemble $\{X_{t_{M+1}:t_{M+N}}^{(m)}\}_{m=1}^S$.
\end{algorithmic}
\end{algorithm}

\textbf{Deep Zakai architecture.}
A uniform architecture is maintained across both real-world datasets. The observation decoder employs a purely neural formulation, featuring a hidden dimension of $d_{\mathrm{model}}=32$, two transformer encoder layers with two attention heads, and a local-statistics window spanning 20 increments. The network initializing the latent belief possesses a layer configuration of $[8, 64, G]$, where $G$ is the number of latent support points. Each decoder branch for the drift, diffusion, and jump components utilizes a 32-unit hidden layer. The residual correction network within the Zakai update has a hidden width of $128$. 

For the synthetic benchmark, the primary model pairs a linear observation decoder with a 401-point latent grid bounded in $[-2,2]$. The polynomial ablation utilizes $d_{\mathrm{model}}=32$ alongside separate drift, volatility, and jump transformers of depths $(2,1,1)$ and two attention heads. The fully neural ablation mirrors the two-layer, two-head architecture of the real-world configuration.

\textbf{Optimization.}
The model is trained end-to-end in PyTorch using the AdamW optimizer with a batch size of 32. We apply parameter-specific optimization schedules: the residual filter network is updated with a base learning rate of $10^{-3}$ and weight decay of $10^{-4}$, whereas the observation decoder uses a learning rate of $5\times 10^{-4}$ and weight decay of $10^{-3}$. All reported runs are trained for 50 epochs, incorporating a 3-epoch warm-up followed by cosine learning rate decay. Gradient norms are clipped at $10.0$ for the residual filter network and $1.0$ for the decoder.

\textbf{Forecast evaluation.}
During inference, \textit{Deep ZakaiJ} generates 100 Monte Carlo forecast trajectories per window by rolling out the learned one-step jump-diffusion law. Point forecasts are derived from the ensemble path, from which MAE and RMSE are computed. Conversely, CRPS, average log-likelihood, and Cov90 metrics are evaluated directly against the empirical predictive distribution induced by the 100 sampled trajectories. All baseline models are evaluated under identical chronological splits. Among the baselines retained in the main tables, \textit{Chronos} serves as the sole pretrained foundation model. The remaining baselines are trained from their official reference implementations, with input and output lengths constrained to match our protocol.

\textbf{Hardware \& Run-time.}
All \textit{Deep ZakaiJ} runs reported in the paper were executed on a single NVIDIA GeForce RTX 4090 GPU, training and inference time are reported in Table~\ref{app:run_time}.
\begin{table}[H]
\centering
\caption{Runtime of Deep ZakaiJ.}
\label{app:run_time}
\scriptsize
\begin{tabular}{lccc}
\toprule
 & Synthetic & XAU/USD & NDBC Wave-Height \\
\midrule
Training (s) & 638.1 & 1494.8 & 1780.9 \\
Inference (s) & 79.6 & 133.2 & 142.8 \\
\bottomrule
\end{tabular}
\end{table}

\subsection{Additional benchmark tables}
\label{app:exp_benchmark}

Table~\ref{tab:real_world_appendix} confirms that the broader set of baselines does not change the main conclusion. On XAU/USD, \textit{Deep ZakaiJ} remains the strongest model across all metrics. On NDBC, GRU-ODE achieves comparable calibration (Cov90 87.3\%), but \textit{Deep ZakaiJ} retains a clear advantage in CRPS and LogLik. The remaining foundation and sequence models are competitive in point error but fall substantially behind in distributional quality.

\begin{table}[H]
\centering
\caption{Additional baselines on the two real-world datasets.}
\label{tab:real_world_appendix}
\scriptsize
\resizebox{\linewidth}{!}{
\begin{tabular}{lccccc|ccccc}
\toprule
& \multicolumn{5}{c|}{XAU/USD} & \multicolumn{5}{c}{NDBC Wave-Height} \\
\cmidrule(lr){2-6}\cmidrule(lr){7-11}
Model & MAE & RMSE & CRPS & LogLik & Cov90 (\%) & MAE & RMSE & CRPS & LogLik & Cov90 (\%) \\
\midrule
GRU-ODE & 0.0058 & 0.0085 & 0.0042 & 3.56 & 83.5 & 0.3291 & 0.4308 & 0.2457 & -0.61 & \textbf{87.3} \\
S4 & 0.0060 & 0.0089 & -- & -- & -- & 0.3197 & 0.4184 & -- & -- & -- \\
TimesFM & 0.0061 & 0.0092 & 0.0044 & 3.61 & 86.6 & 0.3571 & 0.4928 & 0.2911 & -3.14 & 47.5 \\
Moirai-MoE & 0.0062 & 0.0093 & 0.0045 & 3.59 & 86.0 & 0.3545 & 0.4903 & 0.2895 & -3.27 & 47.7 \\
Lag-Llama & 0.0063 & 0.0095 & 0.0045 & 3.60 & 86.0 & 0.3579 & 0.4937 & 0.2919 & -3.20 & 47.8 \\
iTransformer & 0.0077 & 0.0102 & -- & -- & -- & 0.4092 & 0.5339 & -- & -- & -- \\
\midrule
Deep ZakaiJ & \textbf{0.0056} & \textbf{0.0081} & \textbf{0.0041} & \textbf{3.65} & \textbf{92.9} & \textbf{0.3069} & \textbf{0.4084} & \textbf{0.2175} & \textbf{-0.38} & 85.3 \\
\bottomrule
\end{tabular}}
\end{table}

\subsection{Ablation Study}
\label{app:ablation}
\begin{table}[ht]
\centering
\caption{Ablation results on the effect of latent filtering and model misspecification.}
\label{tab:ablation}
\scriptsize
\setlength{\tabcolsep}{5.0pt}
\begin{tabular}{lccccc}
\toprule
Variant & MAE & RMSE & CRPS & LogLik & Cov90 (\%) \\
\midrule
Deep ZakaiJ Decoder-only & 0.0986 & 0.1441 & 0.0795 & 0.56 & 91.7 \\
Deep ZakaiJ with GRU belief & 0.0930 & 0.1570 & 0.0764 & 0.61 & 86.1 \\
\midrule
Deep ZakaiJ + neural network decoder & 0.0916 & 0.1357 & 0.0704 & 0.32 & 91.1 \\
\midrule
Deep ZakaiJ without $R^{\mathcal A}_\psi$ & 0.0889 & 0.1387 & 0.0675 & 0.69 & 87.2 \\
Deep ZakaiJ without $\mathcal A$ & 0.0860 & 0.1366 & 0.0664 & 0.71 & 86.6 \\
Deep ZakaiJ without $\mathcal B$ & 0.0863 & 0.1364 & 0.0664 & 0.70 & 86.1 \\
Deep ZakaiJ without $\mathcal C$ & 0.0859 & 0.1359 & 0.0662 & 0.72 & 86.8 \\
\midrule
Deep ZakaiJ with $G=51$  & 0.0871 & 0.1371 & 0.0671 & 0.73 & 87.6  \\
Deep ZakaiJ with $G=101$  & 0.0851 & 0.1379 & 0.0660  & 0.61 & 86.2 \\
Deep ZakaiJ with $G=201$  & 0.0868 & 0.1359 & 0.0665 & 0.69 & 85.9 \\
\midrule
Deep ZakaiJ & \textbf{0.0846} & \textbf{0.1344} & \textbf{0.0653} & \textbf{0.78} & \textbf{89.0} \\
\bottomrule
\end{tabular}
\end{table}

We evaluate the contribution of each architectural component on the synthetic benchmark. As shown in Tab.~\ref{tab:ablation}, the ablations are organized into four groups.

\textbf{Filtering necessity.} Removing the Zakai encoder entirely (Decoder-only) degrades all distributional metrics, with CRPS rising from 0.0653 to 0.0795 and LogLik dropping from 0.78 to 0.56. Replacing the structured Zakai filter with a GRU of similar parameter count substantially worsens both point metrics (MAE 0.0930 vs 0.0846) and distributional quality. Together, these confirm that latent filtering is the primary driver of forecast quality, and that the structured splitting update provides a stronger inductive bias than a generic recurrent alternative.

\textbf{Decoder misspecification.} The ground-truth observation law is linear, but replacing the linear decoder with a neural network decoder maintains competitive point accuracy (MAE 0.0916) despite not matching the true dynamics. This suggests that when coupled with the Zakai encoder, a flexible decoder can still recover a reasonable predictive law, demonstrating robustness to model misspecification.

\textbf{Splitting components.} Removing any single substep ($\mathcal{A}$, $\mathcal{B}$, or $\mathcal{C}$) or the learned residual $R_\psi^{\mathcal{A}}$ leads to a consistent drop in Cov90 (from 89.0\% to around 86--87\%), while point metrics remain comparable. This suggests that each substep contributes to calibration even when the marginal effect on point accuracy is small.

\textbf{Grid resolution.} Varying $G$ from 51 to 201 shows stable performance across grid sizes, with the default $G=401$ achieving the best overall balance. 

\section{Limitations}
\label{app:limitation}
The current formulation assumes univariate observations. Scaling to multivariate systems will likely require structured approximations such as tensor decompositions or amortized neural surrogates for the prior propagation step. The encoder relies on a fixed uniform latent grid, whose cost in the $\mathcal{A}$-step and rigid support may limit scalability and expressivity, replacing it with adaptive or particle-based representations could better capture multi-modal regimes. Finally, the filtering framework currently conditions only on past observations of the target series. Incorporating exogenous covariates into the belief update could broaden applicability to domains where external drivers are only partially observed, such as macroeconomic indicators in finance or atmospheric forcing in oceanography.

\section{Impact Statement}
\label{app:impact}
This paper presents a filtering-based framework for forecasting time series with latent regimes and jumps. Potential applications include financial risk management and environmental early warning systems (e.g., for storm-driven wave surges) through improved uncertainty quantification. However, overreliance on model-generated predictive intervals in high-stakes settings carries risks, particularly under distribution shifts or violated assumptions. We advise practitioners to treat these forecasts as complementary tools and to validate their calibration on domain-specific data prior to deployment.

\newpage
\section*{NeurIPS Paper Checklist}

\begin{enumerate}

\item {\bf Claims}
    \item[] Question: Do the main claims made in the abstract and introduction accurately reflect the paper's contributions and scope?
    \item[] Answer: \answerYes{} 
    \item[] Justification: The abstract and introduction accurately reflect our three main contributions, i.e., the Strang-splitting Zakai encoder, the ability of tractable learning, and empirical validation on synthetic and real-world datasets. The supporting theoretical derivations are provided in Sections~\ref{sec:method} and Appendix~\ref{app:proof}, and the empirical results are provided in Section~\ref{app:exp-details}.
    \item[] Guidelines:
    \begin{itemize}
        \item The answer \answerNA{} means that the abstract and introduction do not include the claims made in the paper.
        \item The abstract and/or introduction should clearly state the claims made, including the contributions made in the paper and important assumptions and limitations. A \answerNo{} or \answerNA{} answer to this question will not be perceived well by the reviewers. 
        \item The claims made should match theoretical and experimental results, and reflect how much the results can be expected to generalize to other settings. 
        \item It is fine to include aspirational goals as motivation as long as it is clear that these goals are not attained by the paper. 
    \end{itemize}

\item {\bf Limitations}
    \item[] Question: Does the paper discuss the limitations of the work performed by the authors?
    \item[] Answer: \answerYes{} 
    \item[] Justification: The "Limitations" subsection is included in Appendix~\ref{app:limitation}.
    \item[] Guidelines:
    \begin{itemize}
        \item The answer \answerNA{} means that the paper has no limitation while the answer \answerNo{} means that the paper has limitations, but those are not discussed in the paper. 
        \item The authors are encouraged to create a separate ``Limitations'' section in their paper.
        \item The paper should point out any strong assumptions and how robust the results are to violations of these assumptions (e.g., independence assumptions, noiseless settings, model well-specification, asymptotic approximations only holding locally). The authors should reflect on how these assumptions might be violated in practice and what the implications would be.
        \item The authors should reflect on the scope of the claims made, e.g., if the approach was only tested on a few datasets or with a few runs. In general, empirical results often depend on implicit assumptions, which should be articulated.
        \item The authors should reflect on the factors that influence the performance of the approach. For example, a facial recognition algorithm may perform poorly when image resolution is low or images are taken in low lighting. Or a speech-to-text system might not be used reliably to provide closed captions for online lectures because it fails to handle technical jargon.
        \item The authors should discuss the computational efficiency of the proposed algorithms and how they scale with dataset size.
        \item If applicable, the authors should discuss possible limitations of their approach to address problems of privacy and fairness.
        \item While the authors might fear that complete honesty about limitations might be used by reviewers as grounds for rejection, a worse outcome might be that reviewers discover limitations that aren't acknowledged in the paper. The authors should use their best judgment and recognize that individual actions in favor of transparency play an important role in developing norms that preserve the integrity of the community. Reviewers will be specifically instructed to not penalize honesty concerning limitations.
    \end{itemize}

\item {\bf Theory assumptions and proofs}
    \item[] Question: For each theoretical result, does the paper provide the full set of assumptions and a complete (and correct) proof?
    \item[] Answer: \answerYes{} 
    \item[] Justification: For our original theoretical contributions, all underlying assumptions are clearly stated, and complete proofs are provided in the appendix. Meanwhile, results drawn from existing literature are explicitly cited and further discussed in the appendix. 
    \item[] Guidelines:
    \begin{itemize}
        \item The answer \answerNA{} means that the paper does not include theoretical results. 
        \item All the theorems, formulas, and proofs in the paper should be numbered and cross-referenced.
        \item All assumptions should be clearly stated or referenced in the statement of any theorems.
        \item The proofs can either appear in the main paper or the supplemental material, but if they appear in the supplemental material, the authors are encouraged to provide a short proof sketch to provide intuition. 
        \item Inversely, any informal proof provided in the core of the paper should be complemented by formal proofs provided in appendix or supplemental material.
        \item Theorems and Lemmas that the proof relies upon should be properly referenced. 
    \end{itemize}

    \item {\bf Experimental result reproducibility}
    \item[] Question: Does the paper fully disclose all the information needed to reproduce the main experimental results of the paper to the extent that it affects the main claims and/or conclusions of the paper (regardless of whether the code and data are provided or not)?
    \item[] Answer: \answerYes{} 
    \item[] Justification: We have clearly stated the training and forecasting procedures in Section~\ref{sec:method} and Appendix~\ref{app:exp_train}. The datasets details are described in the Appendix~\ref{app:exp_data}. Additionally, we will release our code once the paper is accepted.
    \item[] Guidelines:
    \begin{itemize}
        \item The answer \answerNA{} means that the paper does not include experiments.
        \item If the paper includes experiments, a \answerNo{} answer to this question will not be perceived well by the reviewers: Making the paper reproducible is important, regardless of whether the code and data are provided or not.
        \item If the contribution is a dataset and\slash or model, the authors should describe the steps taken to make their results reproducible or verifiable. 
        \item Depending on the contribution, reproducibility can be accomplished in various ways. For example, if the contribution is a novel architecture, describing the architecture fully might suffice, or if the contribution is a specific model and empirical evaluation, it may be necessary to either make it possible for others to replicate the model with the same dataset, or provide access to the model. In general. releasing code and data is often one good way to accomplish this, but reproducibility can also be provided via detailed instructions for how to replicate the results, access to a hosted model (e.g., in the case of a large language model), releasing of a model checkpoint, or other means that are appropriate to the research performed.
        \item While NeurIPS does not require releasing code, the conference does require all submissions to provide some reasonable avenue for reproducibility, which may depend on the nature of the contribution. For example
        \begin{enumerate}
            \item If the contribution is primarily a new algorithm, the paper should make it clear how to reproduce that algorithm.
            \item If the contribution is primarily a new model architecture, the paper should describe the architecture clearly and fully.
            \item If the contribution is a new model (e.g., a large language model), then there should either be a way to access this model for reproducing the results or a way to reproduce the model (e.g., with an open-source dataset or instructions for how to construct the dataset).
            \item We recognize that reproducibility may be tricky in some cases, in which case authors are welcome to describe the particular way they provide for reproducibility. In the case of closed-source models, it may be that access to the model is limited in some way (e.g., to registered users), but it should be possible for other researchers to have some path to reproducing or verifying the results.
        \end{enumerate}
    \end{itemize}

\item {\bf Open access to data and code}
    \item[] Question: Does the paper provide open access to the data and code, with sufficient instructions to faithfully reproduce the main experimental results, as described in supplemental material?
    \item[] Answer: \answerYes{} 
        \item[] Justification: We will release our code once the paper is accepted. For the datasets, the synthetic data is generated by a coupled jump-diffusion processes. The real datasets are publicly accessible from their original sources and are cited in the main text and appendix. The NDBC wave-height data are available from NOAA/NDBC. The XAU/USD data are available from the Titan FX Research Hub under its Historical Data Terms of Use, which prohibit redistribution; therefore, we provide the source and preprocessing details but do not redistribute the raw records.
    \item[] Guidelines:
    \begin{itemize}
        \item The answer \answerNA{} means that paper does not include experiments requiring code.
        \item Please see the NeurIPS code and data submission guidelines (\url{https://neurips.cc/public/guides/CodeSubmissionPolicy}) for more details.
        \item While we encourage the release of code and data, we understand that this might not be possible, so \answerNo{} is an acceptable answer. Papers cannot be rejected simply for not including code, unless this is central to the contribution (e.g., for a new open-source benchmark).
        \item The instructions should contain the exact command and environment needed to run to reproduce the results. See the NeurIPS code and data submission guidelines (\url{https://neurips.cc/public/guides/CodeSubmissionPolicy}) for more details.
        \item The authors should provide instructions on data access and preparation, including how to access the raw data, preprocessed data, intermediate data, and generated data, etc.
        \item The authors should provide scripts to reproduce all experimental results for the new proposed method and baselines. If only a subset of experiments are reproducible, they should state which ones are omitted from the script and why.
        \item At submission time, to preserve anonymity, the authors should release anonymized versions (if applicable).
        \item Providing as much information as possible in supplemental material (appended to the paper) is recommended, but including URLs to data and code is permitted.
    \end{itemize}

\item {\bf Experimental setting/details}
    \item[] Question: Does the paper specify all the training and test details (e.g., data splits, hyperparameters, how they were chosen, type of optimizer) necessary to understand the results?
    \item[] Answer: \answerYes{} 
    \item[] Justification: We include full procedure for training and forecasting in Section~\ref{sec:method} and Appendix~\ref{app:exp_train}. In Appendix~\ref{app:exp_data}, we describe our dataset preprocessing in full—including the data-splitting strategy and normalization steps—to ensure that our experimental setup can be replicated. Upon acceptance, we will release our source code and reproducible scripts.
    \item[] Guidelines:
    \begin{itemize}
        \item The answer \answerNA{} means that the paper does not include experiments.
        \item The experimental setting should be presented in the core of the paper to a level of detail that is necessary to appreciate the results and make sense of them.
        \item The full details can be provided either with the code, in appendix, or as supplemental material.
    \end{itemize}

\item {\bf Experiment statistical significance}
    \item[] Question: Does the paper report error bars suitably and correctly defined or other appropriate information about the statistical significance of the experiments?
    \item[] Answer: \answerYes{} 
    \item[] Justification: Rather than conventional error bars, we report probabilistic metrics computed over multiple inference runs in Section~\ref{sec:experiments}, which capture the model’s predictive variability.
    \item[] Guidelines:
    \begin{itemize}
        \item The answer \answerNA{} means that the paper does not include experiments.
        \item The authors should answer \answerYes{} if the results are accompanied by error bars, confidence intervals, or statistical significance tests, at least for the experiments that support the main claims of the paper.
        \item The factors of variability that the error bars are capturing should be clearly stated (for example, train/test split, initialization, random drawing of some parameter, or overall run with given experimental conditions).
        \item The method for calculating the error bars should be explained (closed form formula, call to a library function, bootstrap, etc.)
        \item The assumptions made should be given (e.g., Normally distributed errors).
        \item It should be clear whether the error bar is the standard deviation or the standard error of the mean.
        \item It is OK to report 1-sigma error bars, but one should state it. The authors should preferably report a 2-sigma error bar than state that they have a 96\% CI, if the hypothesis of Normality of errors is not verified.
        \item For asymmetric distributions, the authors should be careful not to show in tables or figures symmetric error bars that would yield results that are out of range (e.g., negative error rates).
        \item If error bars are reported in tables or plots, the authors should explain in the text how they were calculated and reference the corresponding figures or tables in the text.
    \end{itemize}

\item {\bf Experiments compute resources}
    \item[] Question: For each experiment, does the paper provide sufficient information on the computer resources (type of compute workers, memory, time of execution) needed to reproduce the experiments?
    \item[] Answer: \answerYes{} 
    \item[] Justification: We include a detailed runtime comparison Table~\ref{app:run_time}, and the specific GPU types used are listed in the Appendix~\ref{app:exp_train}.
    \item[] Guidelines:
    \begin{itemize}
        \item The answer \answerNA{} means that the paper does not include experiments.
        \item The paper should indicate the type of compute workers CPU or GPU, internal cluster, or cloud provider, including relevant memory and storage.
        \item The paper should provide the amount of compute required for each of the individual experimental runs as well as estimate the total compute. 
        \item The paper should disclose whether the full research project required more compute than the experiments reported in the paper (e.g., preliminary or failed experiments that didn't make it into the paper). 
    \end{itemize}
    
\item {\bf Code of ethics}
    \item[] Question: Does the research conducted in the paper conform, in every respect, with the NeurIPS Code of Ethics \url{https://neurips.cc/public/EthicsGuidelines}?
    \item[] Answer: \answerYes{} 
    \item[] Justification: Our work fully conforms with the NeurIPS ethics guidelines, using commercially licensed data responsibly, and ensuring no privacy, fairness, or other concerns arise from our work.
    \item[] Guidelines:
    \begin{itemize}
        \item The answer \answerNA{} means that the authors have not reviewed the NeurIPS Code of Ethics.
        \item If the authors answer \answerNo, they should explain the special circumstances that require a deviation from the Code of Ethics.
        \item The authors should make sure to preserve anonymity (e.g., if there is a special consideration due to laws or regulations in their jurisdiction).
    \end{itemize}

\item {\bf Broader impacts}
    \item[] Question: Does the paper discuss both potential positive societal impacts and negative societal impacts of the work performed?
    \item[] Answer: \answerYes{} 
    \item[] Justification: The broader impacts are discussed as a separate section in Appendix~\ref{app:impact}.
    \item[] Guidelines:
    \begin{itemize}
        \item The answer \answerNA{} means that there is no societal impact of the work performed.
        \item If the authors answer \answerNA{} or \answerNo, they should explain why their work has no societal impact or why the paper does not address societal impact.
        \item Examples of negative societal impacts include potential malicious or unintended uses (e.g., disinformation, generating fake profiles, surveillance), fairness considerations (e.g., deployment of technologies that could make decisions that unfairly impact specific groups), privacy considerations, and security considerations.
        \item The conference expects that many papers will be foundational research and not tied to particular applications, let alone deployments. However, if there is a direct path to any negative applications, the authors should point it out. For example, it is legitimate to point out that an improvement in the quality of generative models could be used to generate Deepfakes for disinformation. On the other hand, it is not needed to point out that a generic algorithm for optimizing neural networks could enable people to train models that generate Deepfakes faster.
        \item The authors should consider possible harms that could arise when the technology is being used as intended and functioning correctly, harms that could arise when the technology is being used as intended but gives incorrect results, and harms following from (intentional or unintentional) misuse of the technology.
        \item If there are negative societal impacts, the authors could also discuss possible mitigation strategies (e.g., gated release of models, providing defenses in addition to attacks, mechanisms for monitoring misuse, mechanisms to monitor how a system learns from feedback over time, improving the efficiency and accessibility of ML).
    \end{itemize}
    
\item {\bf Safeguards}
    \item[] Question: Does the paper describe safeguards that have been put in place for responsible release of data or models that have a high risk for misuse (e.g., pre-trained language models, image generators, or scraped datasets)?
    \item[] Answer: \answerNA{} 
    \item[] Justification: The datasets used in this work are either synthetically generated or sourced from publicly available repositories, which do not contain personal or sensitive information. The model itself is a domain-specific time series forecaster and does not pose risks comparable to general-purpose generative models, so no special safeguards are required.
    \item[] Guidelines:
    \begin{itemize}
        \item The answer \answerNA{} means that the paper poses no such risks.
        \item Released models that have a high risk for misuse or dual-use should be released with necessary safeguards to allow for controlled use of the model, for example by requiring that users adhere to usage guidelines or restrictions to access the model or implementing safety filters. 
        \item Datasets that have been scraped from the Internet could pose safety risks. The authors should describe how they avoided releasing unsafe images.
        \item We recognize that providing effective safeguards is challenging, and many papers do not require this, but we encourage authors to take this into account and make a best faith effort.
    \end{itemize}

\item {\bf Licenses for existing assets}
    \item[] Question: Are the creators or original owners of assets (e.g., code, data, models), used in the paper, properly credited and are the license and terms of use explicitly mentioned and properly respected?
    \item[] Answer: \answerYes{} 
    \item[] Justification: Our models are clearly cited in both main text and the appendix, and dataset licenses are provided in the appendix.
    \item[] Guidelines:
    \begin{itemize}
        \item The answer \answerNA{} means that the paper does not use existing assets.
        \item The authors should cite the original paper that produced the code package or dataset.
        \item The authors should state which version of the asset is used and, if possible, include a URL.
        \item The name of the license (e.g., CC-BY 4.0) should be included for each asset.
        \item For scraped data from a particular source (e.g., website), the copyright and terms of service of that source should be provided.
        \item If assets are released, the license, copyright information, and terms of use in the package should be provided. For popular datasets, \url{paperswithcode.com/datasets} has curated licenses for some datasets. Their licensing guide can help determine the license of a dataset.
        \item For existing datasets that are re-packaged, both the original license and the license of the derived asset (if it has changed) should be provided.
        \item If this information is not available online, the authors are encouraged to reach out to the asset's creators.
    \end{itemize}

\item {\bf New assets}
    \item[] Question: Are new assets introduced in the paper well documented and is the documentation provided alongside the assets?
    \item[] Answer: \answerYes{} 
    \item[] Justification: We will release our code once the paper is accepted, which will be documented. For the dataset, the NDBC wave-height data is collected by NOAA and is in the public domain, while the XAU/USD data can be publicly downloaded from the Titan FX Research Hub under its Historical Data Terms of Use.
    \item[] Guidelines:
    \begin{itemize}
        \item The answer \answerNA{} means that the paper does not release new assets.
        \item Researchers should communicate the details of the dataset\slash code\slash model as part of their submissions via structured templates. This includes details about training, license, limitations, etc. 
        \item The paper should discuss whether and how consent was obtained from people whose asset is used.
        \item At submission time, remember to anonymize your assets (if applicable). You can either create an anonymized URL or include an anonymized zip file.
    \end{itemize}

\item {\bf Crowdsourcing and research with human subjects}
    \item[] Question: For crowdsourcing experiments and research with human subjects, does the paper include the full text of instructions given to participants and screenshots, if applicable, as well as details about compensation (if any)? 
    \item[] Answer: \answerNA{} 
    \item[] Justification: This project does not involve any crowdsourcing or human-subject experiments.
    \item[] Guidelines:
    \begin{itemize}
        \item The answer \answerNA{} means that the paper does not involve crowdsourcing nor research with human subjects.
        \item Including this information in the supplemental material is fine, but if the main contribution of the paper involves human subjects, then as much detail as possible should be included in the main paper. 
        \item According to the NeurIPS Code of Ethics, workers involved in data collection, curation, or other labor should be paid at least the minimum wage in the country of the data collector. 
    \end{itemize}

\item {\bf Institutional review board (IRB) approvals or equivalent for research with human subjects}
    \item[] Question: Does the paper describe potential risks incurred by study participants, whether such risks were disclosed to the subjects, and whether Institutional Review Board (IRB) approvals (or an equivalent approval/review based on the requirements of your country or institution) were obtained?
    \item[] Answer: \answerNA{} 
    \item[] Justification: This project does not involve any human-subject experiments.
    \item[] Guidelines:
    \begin{itemize}
        \item The answer \answerNA{} means that the paper does not involve crowdsourcing nor research with human subjects.
        \item Depending on the country in which research is conducted, IRB approval (or equivalent) may be required for any human subjects research. If you obtained IRB approval, you should clearly state this in the paper. 
        \item We recognize that the procedures for this may vary significantly between institutions and locations, and we expect authors to adhere to the NeurIPS Code of Ethics and the guidelines for their institution. 
        \item For initial submissions, do not include any information that would break anonymity (if applicable), such as the institution conducting the review.
    \end{itemize}

\item {\bf Declaration of LLM usage}
    \item[] Question: Does the paper describe the usage of LLMs if it is an important, original, or non-standard component of the core methods in this research? Note that if the LLM is used only for writing, editing, or formatting purposes and does \emph{not} impact the core methodology, scientific rigor, or originality of the research, declaration is not required.
    \item[] Answer: \answerNA{} 
    \item[] Justification: This project does not involve LLMs in any core method developments.
    \item[] Guidelines:
    \begin{itemize}
        \item The answer \answerNA{} means that the core method development in this research does not involve LLMs as any important, original, or non-standard components.
        \item Please refer to our LLM policy in the NeurIPS handbook for what should or should not be described.
    \end{itemize}

\end{enumerate}

\end{document}